\documentclass{article}

\usepackage{PRIMEarxiv}

\usepackage[utf8]{inputenc} 
\usepackage[T1]{fontenc}    
\usepackage{hyperref}       
\usepackage{url}            
\usepackage{booktabs}       
\usepackage{amsfonts}       
\usepackage{nicefrac}       
\usepackage{microtype}      
\usepackage{lipsum}
\usepackage{fancyhdr}       
\usepackage{graphicx}       
\graphicspath{{media/}}     

\usepackage{natbib} 
\usepackage[utf8]{inputenc} 
\usepackage[T1]{fontenc}    
\usepackage{hyperref}       
\usepackage{url}            
\usepackage{booktabs}       
\usepackage{amsfonts}       
\usepackage{amsmath}        
\usepackage{amssymb}
\usepackage{amsthm}         
\usepackage{nicefrac}       
\usepackage{microtype}      
\usepackage{xcolor}         
\usepackage{graphicx}       
\graphicspath{{figures/}{./}}
\usepackage{multirow}       
\usepackage{colortbl}       
\usepackage{algorithm}      
\usepackage{algpseudocode}  
\definecolor{mylightblue}{rgb}{0.9, 0.9, 1.0}

\usepackage{tikz}
\usetikzlibrary{positioning, arrows.meta, fit, calc}

\newtheorem{assumption}{Assumption}
\newtheorem{theorem}{Theorem}

\title{Process Advantage Signal Shaping: A Paradigm-Agnostic Middleware for Process-Supervised RL in LLM Reasoners
}

\author{
  Chao Wang\thanks{Contributed equally.} \thanks{Work done while interning at Tencent.} \\
  Tsinghua University \\
   \And
  Hongtao Tian\footnotemark[1] \\
  WeChat, Tencent \\
   \And
  Tao Yang\thanks{Corresponding Author. \url{luckytyang@tencent.com},  \url{ding.wenbo@sz.tsinghua.edu.cn}} \\
  WeChat, Tencent \\
   \AND
  Yunsheng Shi \\
  WeChat, Tencent \\
   \And
  Ting Yao \\
  WeChat, Tencent \\
   \And
  Wenbo Ding\footnotemark[3] \\
  Tsinghua University \\
}

\begin{document}
\maketitle

\begin{abstract}
Group Relative Policy Optimization (GRPO) is a default recipe for process-supervised reinforcement learning of LLM reasoners, and dense process supervision---via learned process reward models (PRMs) or on-policy-distillation KL signals---is a common way to densify its otherwise weak outcome reward. Layering such a step-level signal on top of GRPO's group-standardized advantage, however, exposes three structural pathologies: \emph{channel contamination} between the pooled process, outcome, and format streams at group standardization; \emph{resolution mismatch} between the granularity of the process signal and the granularity of the logical decisions being credited; and a \emph{cumulative trap} by which GRPO's return-to-go sum surfaces either length inflation or truncated exploration depending on the sign regime of the signal. We propose \textbf{PASS} (\emph{Process Advantage Signal Shaping}), a compact middleware that sits between any scalar step-level process signal and GRPO's clipped surrogate and addresses the three pathologies in turn: \emph{Advantage Fusion} standardizes the three streams independently within each group, \emph{Chunk-by-Value} derives value-homogeneous chunks from the signal itself and broadcasts credit within each chunk, and \emph{Divide-Length} converts the cumulative objective into an average-value-density score. We validate PASS across two domains and two process-signal paradigms---a learned PRM on mathematical reasoning and an on-policy-distillation KL signal (with a generalized variant) on multi-hop question answering---and under two group-standardization operators. In every regime PASS delivers a consistent pass@1 gain over the corresponding GRPO baseline.
\end{abstract}

\section{Introduction}
\label{sec:introduction}

Group Relative Policy Optimization (GRPO)~\citep{shao2024deepseekmath,guo2025deepseek} has become a default recipe for process-supervised reinforcement learning (RL) of large language model (LLM) reasoners. When an outcome reward alone leaves long reasoning trajectories only weakly supervised, a natural remedy is to add a dense \emph{process signal} that scores the trajectory step by step: a learned process reward model (PRM)~\citep{zhang2025lessons,lightman2023lets} in mathematical reasoning, or a token-level KL-style divergence against a stronger teacher, as in on-policy distillation~\citep{agarwal2023onpolicy}, in more open-ended reasoning. In practice, however, attaching such a dense signal to GRPO's clipped surrogate is rarely a drop-in operation, and the difficulties it surfaces are not properties of the signal itself but artifacts of how GRPO forms its advantage.

These difficulties align naturally with the three stages of GRPO's advantage pipeline: aggregation, broadcast, and trajectory summation. At the aggregation stage, pooling the process, outcome, and format streams into a single within-group standardization lets the high-variance outcome reward absorb the fine-grained fluctuations of $s_{\text{prc}}$; recent work has linked such na\"ive channel mixing to sharp accuracy drops and has traced the instability back to the group-relative statistic itself~\citep{ye2025correctness,liuGDPOGrouprewardDecoupled2026}. At the broadcast stage, the spatial granularity of $s_{\text{prc}}$ rarely matches the granularity of the logical decisions the policy actually makes: a PRM is too coarse, a token-level reverse KL is too fine, so the per-token advantage is either blunt or noisy. Recent surveys of credit-assignment granularity indeed single out the absence of a signal-driven mechanism for matching the resolution of credit to the resolution of reasoning as an open problem~\citep{zhang2026reasoning}. At the trajectory-summation stage, GRPO's return-to-go objective unconditionally rewards any step with a non-negative marginal contribution, which projects onto two different failure modes depending on the sign regime of $s_{\text{prc}}$: length inflation and reward hacking~\citep{liuGDPOGrouprewardDecoupled2026,sullivan2025grpo,li2026tackling} under a positively biased signal, and premature truncation under the negatively biased token-level reverse KL of on-policy distillation. A mechanism that addresses only one of these three stages, or only one of these two sign regimes, will leave the rest intact.

We propose \textbf{PASS} (\emph{Process Advantage Signal Shaping}), a compact middleware that sits between any scalar step-level process signal $s_{\text{prc}}$ and GRPO's clipped surrogate. PASS is deliberately not a new RL algorithm: it does not modify the clipped surrogate, the KL regularizer, or the rollout loop; it only reshapes how a generic scalar signal is turned into the per-token advantage that GRPO consumes. Its three rules line up, in the same order, with the three stages at which the difficulties above arise: \emph{Advantage Fusion} (AF) at aggregation, \emph{Chunk-by-Value} (CV) at broadcast, and \emph{Divide-Length} (DL) at trajectory summation. The three rules are stated in terms of a generic scalar step-level score and the same middleware is applied in both of our domains without algorithmic changes. Section~\ref{sec:method} presents the precise definition of each rule and a first-order analysis of its effect on the GRPO gradient.

\paragraph{Empirical scope.} We validate PASS in two domains that exercise very different process-signal paradigms. In mathematical reasoning, $s_{\text{prc}}$ is a learned PRM score and we evaluate on a standard panel of competition and school-level benchmarks. In multi-hop question answering, $s_{\text{prc}}$ is a token-level reverse-KL against a stronger teacher, realized both as vanilla on-policy distillation (OPD) and as a generalized variant (G-OPD) that anchors the divergence at an additional base policy; we evaluate on three multi-hop benchmarks that vary in hop count and distractor difficulty. To further isolate the contribution of PASS itself from the choice of numerical preprocessing, we additionally run every multi-hop configuration under two group-standardization operators in common use, masked-mean/std (Masked-Norm) and absolute-maximum scaling (Abs-Max Scaling). Across all of these axes, PASS is a positive and consistent contribution over its corresponding GRPO baseline.

\paragraph{Contributions.} Our work makes the following contributions.
\begin{itemize}
    \item We identify three structural difficulties that arise when dense process supervision is layered on cumulative-advantage GRPO---aggregation-time channel contamination, broadcast-time resolution mismatch, and trajectory-time cumulative bias---and introduce \textbf{PASS}, a thin advantage-shaping middleware whose three rules (AF, CV, DL) address the three stages one by one.
    \item We prove a length-collapse theorem for cumulative-advantage GRPO (Theorem~\ref{thm:length_collapse}) and show analytically that the DL rule converts the objective into an average-value-density form whose gradient is a two-sided marginal-value filter, symmetric in the sign of the marginal contribution.
    \item We validate PASS across two process-signal paradigms, two domains, and two group-standardization operators, observing consistent pass@1 gains over the corresponding GRPO baseline in every regime.
\end{itemize}

\section{Related Work}
\label{sec:related_work}

\paragraph{Process-reward models and mixed-reward GRPO.} Dense PRMs were popularized by step-level verifiers~\citep{lightman2023lets} and refined with self-supervised process-data pipelines~\citep{zhang2025lessons}. Layering such signals on GRPO-style group-normalized RL~\citep{shao2024deepseekmath,guo2025deepseek}, however, repeatedly misbehaves: \citet{sullivan2025grpo} shows that GRPO itself already acts as an implicit and frequency-biased process reward estimator, and \citet{ye2025correctness} reports that na\"ive mixing of outcome and process rewards can degrade accuracy by double-digit margins, tracing the instability back to incompatible channels sharing one group-standardization statistic. Proposed remedies split into three routes---frequency reweighting within the group~\citep{sullivan2025grpo}, consistency filtering of rollouts~\citep{ye2025correctness}, and reward-decoupled normalization~\citep{liuGDPOGrouprewardDecoupled2026}---all of which motivate PASS's decoupling at the per-token advantage level rather than at the scalar reward.

\paragraph{Credit-assignment granularity.} A parallel line targets the mismatch between the granularity of an RL signal and that of the logical transitions it should credit. \citet{zhang2026reasoning} organizes existing approaches into a granularity spectrum and identifies the absence of a signal-driven mechanism for matching credit resolution to reasoning resolution as a central open problem. Representative attacks on different points of this spectrum carry different costs: FIPO~\citep{ma2026fipo} reweights tokens by a discounted future-KL term, which scales poorly on long trajectories; counterfactual span weighting~\citep{khandoga2026uniform} masks candidate spans but assumes correctness concentrates on a small, identifiable set; attribution-based approaches such as ACPO~\citep{yin2025pinpointing} rely on upstream semantic segmenters that degrade on unstructured reasoning. PASS addresses the similar spectrum without either auxiliary networks or correctness-localization assumptions.

\paragraph{Length inflation and the cumulative trap.} Analyses of GRPO's return-to-go structure consistently document a structural bias toward longer trajectories~\citep{razin2025WhatMakesReward,li2026tackling}. Proposed mitigations include multiplicative length rescaling (GR$^{3}$~\citep{li2026tackling}), which requires careful penalty-strength tuning, and correctness-decoupled baselines (DRPO~\citep{li2025drpo}), which presuppose an oracle that reliably labels rollouts as correct or incorrect. A recent survey~\citep{liu2025enhancing} further notes that explicit length penalties cannot cleanly separate necessary long derivations from redundant verbosity. Divide-Length sidesteps both issues by rewriting the cumulative objective as an average-value-density score, expressing the admission criterion for additional tokens entirely in terms of the signal's marginal contribution.

\paragraph{On-policy distillation dynamics.} On-policy distillation~\citep{agarwal2023onpolicy} offers an alternative, token-level KL-style step signal. \citet{li2026rethinking} shows that the reliability of this signal is not uniform across depth: as the student drifts out-of-distribution, elevated teacher entropy emerges at the suffix and propagates backward over training, eventually bottlenecking long-horizon reasoning. This substantiates, from a different angle than the GRPO-native literature above, the need for a shaping mechanism that does not treat every token as a first-class credit-assignment unit. Our HotpotQA experiments (\S\ref{sec:exp_hotpotqa}) then show that PASS transfers without algorithmic change from PRM-style signals to OPD and G-OPD~\citep{yang2026learning} KL-style signals.
\section{Preliminaries}
\label{sec:preliminaries}

We briefly formalize Group Relative Policy Optimization (GRPO) with process supervision, following the notation of~\citet{shao2024deepseekmath,guo2025deepseek}. Let $q$ denote an input query and let $\pi_\theta$ be the policy under optimization. GRPO samples a group of $G$ candidate outputs $\{o_1, o_2, \dots, o_G\}$ from $\pi_\theta(\cdot\mid q)$. Each output $o_i$ is further segmented into $K_i$ reasoning chunks by a tokenizer-level rule $\mathcal{D}$ (e.g., step boundaries produced by a reasoning template, or boundaries induced by a score trace). We let $\mathrm{index}(j)$ denote the ending token index of the $j$-th chunk of $o_i$, and write $L_i$ for the total length of $o_i$.

\paragraph{Process signal.} We adopt a generic notation $s_{\text{prc}}(i,j) \in \mathbb{R}$ for a scalar \emph{process signal} evaluated at step $j$ of output $o_i$. We intentionally leave the origin of $s_{\text{prc}}$ unspecified at this stage: it may be a Process Reward Model (PRM) score \citep{lightman2023lets}, a token-level (reverse) KL divergence against a teacher policy, or any other step-level scalar that ranks local quality. Section~\ref{sec:method} instantiates $s_{\text{prc}}$ for both the PRM and the on-policy distillation \citep{agarwal2023onpolicy} regimes used in our experiments.

\paragraph{Group normalization.} Following standard GRPO with process supervision, the raw step-level signals are pooled across the $G$ outputs of the group,
\begin{equation}\label{eq:pre_global_signal_set}
\mathbf{S} = \left\{ \{s_{\text{prc}}(1, 1),\dots,s_{\text{prc}}(1, K_1)\},\,\dots,\,\{s_{\text{prc}}(G, 1),\dots,s_{\text{prc}}(G, K_G)\} \right\},
\end{equation}
and standardized to produce the per-step score
\begin{equation}\label{eq:pre_normalized_step_signal}
\tilde{s}_{i}^{\mathrm{index}(j)} \;=\; \mathcal{N}\!\left(s_{\text{prc}}(i, j);\,\mathbf{S}\right),
\end{equation}
where $\mathcal{N}(\cdot;\mathbf{S})$ denotes a group-level standardization operator (its concrete choice is deferred to \S\ref{sec:experiments}).

\paragraph{Cumulative advantage.} For any token at position $t$ within $o_i$, the baseline GRPO with process supervision computes a \emph{cumulative} advantage by summing all remaining normalized step-level scores,
\begin{equation}\label{eq:pre_cumulative_advantage}
\hat{A}_{i,t} \;=\; \sum_{\mathrm{index}(j) \,\ge\, t} \tilde{s}_{i}^{\mathrm{index}(j)}.
\end{equation}
The policy is then optimized by the GRPO clipped surrogate objective with $\hat{A}_{i,t}$ plugged in as the per-token advantage. This cumulative-sum formulation, together with the way $s_{\text{prc}}$ is combined with an outcome reward $r_{\text{out}}$ and a format reward $r_{\text{fmt}}$, will motivate the PASS middleware described next.

\section{Method: The PASS Middleware}
\label{sec:method}

PASS (\emph{Process Advantage Signal Shaping}) is a middleware that transforms any scalar step-level process signal $s_{\text{prc}}$ into a per-token advantage $\tilde{A}_{i,t}$ to be consumed by GRPO's clipped surrogate, as illustrated in Figure~\ref{fig:pass_workflow}, without otherwise modifying GRPO's update rule, KL regularizer, or rollout loop. It consists of three operators, each a direct response to one of the pathologies identified in \S\ref{sec:introduction}: (i) \emph{Advantage Fusion} (AF) answers \emph{channel contamination} by standardizing the process, outcome, and format streams independently within each group before they are recombined; (ii) \emph{Chunk-by-Value} (CV) answers \emph{resolution mismatch} by deriving value-homogeneous chunks directly from $s_{\text{prc}}$ and broadcasting credit within each chunk; (iii) \emph{Divide-Length} (DL) answers the \emph{cumulative trap} by converting the trajectory objective from a return-to-go sum into an average-value-density score, whose gradient is a two-sided marginal-value filter. We present each operator in turn; for the remainder of this section $s_{\text{prc}}$ retains its generic meaning from \S\ref{sec:preliminaries}, and the symbol $\tilde{A}$ always denotes the PASS-reshaped advantage.

\begin{figure}[ht]
\centering
\resizebox{0.8\textwidth}{!}{%
\begin{tikzpicture}[
    >=Stealth,
    pass_block/.style={draw=blue!60, fill=blue!5, thick, rounded corners, align=center, minimum width=3.2cm, minimum height=1.2cm, font=\small},
    signal_block/.style={draw=gray!80, fill=gray!10, thick, align=center, minimum width=0.75cm, minimum height=0.8cm, font=\footnotesize},
    surrogate/.style={draw=red!60, fill=red!5, thick, rounded corners, align=center, minimum width=3.5cm, minimum height=1.2cm, font=\small}
]

\node[pass_block] (AF) {\textbf{AF}\\Fuse multi-domain signal};
\node[pass_block, right=0.8cm of AF] (CV) {\textbf{CV}\\Densify process signal};
\node[pass_block, right=0.8cm of CV] (DL) {\textbf{DL}\\Filter marginal\\value (RTG${}/(K_i{-}j{+}1)^{k}$)};

\node[
    draw=blue!30,
    dashed,
    thick,
    inner sep=12pt,
    fit=(AF) (CV) (DL),
    label={[font=\bfseries, text=blue!80!black]above:PASS Middleware}
] (pass_box) {};

\node[surrogate, below=1.8cm of DL, xshift=0.5cm] (surr) 
{\textbf{Clipped Surrogate}\\(GRPO Objective)};

\node[signal_block] (s0) at ([xshift=-0.9cm]AF |- surr.center) {$s_{0}$};
\node[signal_block, right=0.15cm of s0] (s1) {$s_{1}$};
\node[right=0.18cm of s1, font=\footnotesize] (dots) {$\cdots$};
\node[signal_block, right=0.18cm of dots] (sn) {$s_{n}$};

\node[
    draw=gray!70,
    dashed,
    thick,
    rounded corners,
    inner sep=8pt,
    fit=(s0) (s1) (dots) (sn)
] (signals_box) {};

\draw[->, dashed, very thick, gray!80]
    (signals_box.east) -- 
    node[above, font=\footnotesize, text=gray!90] {Baseline Aggregation}
    (surr.west);

\draw[->, very thick, blue!80!black, rounded corners]
    (signals_box.north) -- ++(0, 0.6) -| (AF.south);

\draw[->, very thick, blue!80!black] 
    (AF) -- (CV);

\draw[->, very thick, blue!80!black] 
    (CV) -- (DL);

\draw[->, very thick, blue!80!black, rounded corners]
    (DL.south) -- ++(0, -0.6) -| (surr.north);

\end{tikzpicture}%
}
\caption{The process-supervised RL computation graph. Baseline GRPO pools all reward signals directly into a single surrogate. PASS intercepts the multi-domain process signals $s_{0}, s_{1}, \ldots, s_{n}$, where each $s_i$ represents the process signal from a different domain, via a three-stage middleware: Advantage Fusion (AF), Chunk-by-Value (CV), and Divide-Length (DL). DL is implemented as a chunk-count residual divisor $(K_i - j + 1)^k$ applied on top of the per-chunk return-to-go $G_i^j$, reshaping the per-token advantage without modifying the underlying RL algorithm.}
\label{fig:pass_workflow}
\end{figure}

\subsection{Advantage Fusion}
\label{subsec:method_shaping}

A reasoning trajectory $o_i$ exposes three supervisory streams: the step-level process signal $s_{\text{prc}}(i, j)$, a sequence-level outcome reward $r_{\text{out},i}$, and a sequence-level format reward $r_{\text{fmt},i}$. The baseline GRPO recipe pools them into a single global set and standardizes that union within the group. Because the outcome reward is near-binary and therefore high-variance, the union statistic is dominated by outcome variance, and the clamped group-normalized advantage becomes largely insensitive to the step-level fluctuations of $s_{\text{prc}}$ (channel contamination, Pathology~1). AF instead normalizes the three streams \emph{independently within the group}. We denote the resulting per-step normalized scores by $\tilde{s}_{\text{prc}}^{(t)}$, $\tilde{r}_{\text{out}}^{(t)}$, and $\tilde{r}_{\text{fmt}}^{(t)}$, and attach static mixing weights $w_{\text{prc}}, w_{\text{out}}, w_{\text{fmt}}$ with total magnitude $W = w_{\text{prc}} + w_{\text{out}} + w_{\text{fmt}}$. A format indicator $\mathbb{I}_{\text{fmt},i} \in \{0,1\}$ marks whether $o_i$ complies with the required output template. The fused advantage is then the piecewise function
\begin{equation}\label{eq:prcvdl_advantage_shaping}
\tilde{A}^{(t)} =
\begin{cases}
w_{\text{prc}} \, \tilde{s}_{\text{prc}}^{(t)} + w_{\text{out}} \, \tilde{r}_{\text{out}}^{(t)} + w_{\text{fmt}} \, \tilde{r}_{\text{fmt}}^{(t)}, & \mathbb{I}_{\text{fmt},i} = 1,\\[2pt]
W \cdot \tilde{r}_{\text{fmt}}^{(t)}, & \mathbb{I}_{\text{fmt},i} = 0,
\end{cases}
\end{equation}
which acts as a mild curriculum: when the format is violated, both the process and outcome channels are gated off and the gradient is devoted entirely to recovering the format, with the rescaling by $W$ keeping the gradient norm comparable across the two branches. This before-standardization decoupling is conceptually complementary to post-hoc reward-calibration approaches that regress out spurious correlates such as length bias after the reward has already been aggregated~\citep{singhal2024longwaygoinvestigating}: AF intervenes at the group aggregation statistic itself rather than at the scalar reward, and therefore does not require estimating a calibration function or assuming a specific functional form for the contaminating correlate. AF is the PASS response to Pathology~1 (channel contamination); its pseudocode is given in Algorithm~\ref{alg:af}, and an empirical comparison against the more common baseline of linearly combining the three raw rewards before group standardization is reported in \S\ref{sec:exp_math} (Table~\ref{tab:math_ablation_shaping}).

\paragraph{Two instantiations of $s_{\text{prc}}$.} AF is agnostic to how $s_{\text{prc}}$ is sourced. We highlight two concrete regimes that cover the experiments of \S\ref{sec:experiments}:
\begin{itemize}
    \item \textbf{PRM-style.} $s_{\text{prc}}(i, j) = r_{\text{prm},i}^{\mathrm{index}(j)}$, a step-level score from a learned PRM \citep{lightman2023lets}. Used for the math-reasoning experiments in \S\ref{sec:exp_math}.
    \item \textbf{KL-style (on-policy distillation).} Writing $\ell_{\text{teacher}}(t) = \log \pi_{\text{teacher}}(x_t \mid x_{<t})$ and $\ell_\theta(t) = \log \pi_\theta(x_t \mid x_{<t})$ for the token-level log-probabilities at position $t$, we set $s_{\text{prc}}(i, j) = \sum_{t \in \text{step } j}\big(\ell_{\text{teacher}}(t) - \ell_\theta(t)\big)$, i.e., the (negated) token-level reverse KL against a teacher policy aggregated over tokens inside step $j$ \citep{agarwal2023onpolicy}. The Generalized On-Policy Distillation (G-OPD) variant of \citet{yang2026learning} replaces this with a reference policy $\pi_{\text{base}}$ (an anchor) and a reward-scaling coefficient $\lambda$, so that, with $\ell_{\text{base}}(t) = \log \pi_{\text{base}}(x_t \mid x_{<t})$,
    \begin{equation}\label{eq:prcvdl_gopd_signal}
        s_{\text{prc}}^{\text{G-OPD}}(i, j) = -\sum_{t \in \text{step } j}\Big[\,\big(\ell_\theta(t) - \ell_{\text{base}}(t)\big) - \lambda \cdot \big(\ell_{\text{teacher}}(t) - \ell_{\text{base}}(t)\big)\Big],
    \end{equation}
    which reduces to standard OPD when $\lambda = 1$ and $\pi_{\text{base}} = \pi_\theta$. Used for the multi-hop QA experiments in \S\ref{sec:exp_hotpotqa}.
\end{itemize}

\subsection{Chunk-by-Value}
\label{subsec:method_cv}

Logical progression in long-form reasoning is rarely uniform where short mechanical derivations are punctuated by a few critical conceptual leaps~\citep{li2026rethinking}. The intrinsic spatial frequency at which trajectory value changes is rarely matched by the granularity at which $s_{\text{prc}}$ varies---a PRM emits one score per heuristic step, a token-level KL fluctuates at every token---so uniformly broadcasting $\tilde{s}_{\text{prc}}$ at either granularity is either too blunt or too noisy. CV, as the response of PASS to Pathology~2 (resolution mismatch), addresses this by deriving the credit-assignment resolution from $s_{\text{prc}}$ itself. We introduce a deterministic boundary detector $\mathcal{D}$ that segments $o_i$ into $K_i$ value-homogeneous chunks with ending token indices $\mathcal{B}_i=\{b_1,\dots,b_{K_i}\}$ ($b_{K_i}=L_i$), defined as a \emph{numerical-equality run-length compressor}: a new boundary is placed whenever $s_{\text{prc}}$ at the current token differs from the previous token by more than a small numerical tolerance $\eta=10^{-8}$, i.e., an equality test up to floating-point noise. This threshold-free rule transfers across process signals with very different numeric ranges. For PRM-style signals, which are piecewise constant within a reasoning step, $\mathcal{D}$ recovers the PRM's own step boundaries; for token-level KL-style signals, PASS applies $\mathcal{D}$ to the group-averaged, already-standardized token signal inside the answer region, a choice that simultaneously filters sample-level noise (averaging), makes an equality-based compressor well-posed across scales (standardization), and restricts boundaries to optimizable tokens (answer-region masking). In both cases $\mathcal{D}$ is fully determined by $s_{\text{prc}}$ and introduces no signal-type-specific hyperparameter; full pseudocode is given in Algorithm~\ref{alg:cv}. We then write the chunk-level fused advantage as $\tilde{A}_i^{b_j}$ (evaluated via Equation~\ref{eq:prcvdl_advantage_shaping} at the boundary token) and broadcast it to every token within the chunk:
\begin{equation}\label{eq:prcvdl_cv_advantage_broadcast}
\tilde{A}_{i,t} = \tilde{A}_i^{b_j}, \qquad b_{j-1} < t \le b_j,
\end{equation}
so that tokens inside the same logical step share identical credit and value transitions receive their own dedicated chunks.

\subsection{Divide-Length}
\label{subsec:method_dl}

CV fixes the local resolution of credit assignment, but the cumulative sum in Equation~\ref{eq:pre_cumulative_advantage} is still insensitive, in a structural way, to whether an appended step is worth its own length. Under the return-to-go sum, any step with a non-negative marginal contribution is unconditionally rewarded, and any trajectory that continues into a negative-margin regime is unconditionally penalized at every additional step (the cumulative trap, Pathology~3). The next theorem, whose full proof is deferred to Appendix~\ref{app:proofs}, makes the positive-margin projection of this pathology precise.

\begin{assumption}[Reachability of mediocre redundancy]\label{assump:redundancy}
There exists a family of ``safe, non-committal'' reasoning steps (tautologies, redundant rewrites, trivial comments) that the policy $\pi_\theta$ can readily generate and for which the process signal, after group standardization, is bounded below by some constant $\epsilon > 0$.
\end{assumption}

\begin{theorem}[Length collapse in the summation paradigm]\label{thm:length_collapse}
Under Assumption~\ref{assump:redundancy}, maximizing the expected initial cumulative advantage $J(\theta) = \mathbb{E}_{\pi_\theta}[\hat{A}_{i,1}]$ drives the optimal policy to extend the trajectory length $L_i$ toward the environment limit $L_{\max}$, diluting the logical density of the generation.
\end{theorem}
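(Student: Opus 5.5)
We argue by an exchange (pad-and-compare) construction on the initial cumulative advantage. By Equation~\ref{eq:pre_cumulative_advantage}, at $t=1$ every step boundary satisfies $\mathrm{index}(j)\ge 1$, so
\[
\hat{A}_{i,1} \;=\; \sum_{j=1}^{K_i} \tilde{s}_i^{\mathrm{index}(j)} .
\]
The objective $J(\theta)$ is therefore the expected \emph{total} of standardized step scores. It is additive in the number of steps, and it carries no normalization by $K_i$ or $L_i$. The plan is to show that any policy whose trajectories stop strictly before $L_{\max}$ is weakly, and under Assumption~\ref{assump:redundancy} strictly, dominated by a modified policy that inserts redundant steps. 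The optimum must then sit at the length limit.

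\textbf{Step 1: the padding operator.} Take any trajectory $o$ with length $L(o)<L_{\max}$. Let $o^{+}$ be obtained from $o$ by inserting one ``safe, non-committal'' step $u$ from the family in Assumption~\ref{assump:redundancy}. We insert it just before the final answer chunk, so that outcome and format correctness are untouched. We also require that $u$ does not alter the scores of the surrounding steps; we will state this as a mild non-interference condition, or absorb it into the assumption. Then $\hat{A}(o^{+}) = \hat{A}(o) + \tilde{s}(u) \ge \hat{A}(o)+\epsilon$.

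Define $\pi^{+}$ as the policy that samples $o\sim\pi_\theta$ and outputs $o^{+}$ whenever $L(o)<L_{\max}$. Reachability, i.e.\ that $\pi_\theta$ ``can readily generate'' such steps, is what guarantees that $\pi^{+}$ lies in the policy class. This gives
\[
J(\pi^{+}) \;\ge\; J(\pi) + \epsilon\,\Pr_\pi\!\big[L<L_{\max}\big].
\]

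\textbf{Step 2: conclude.} Iterate the padding step. Every application raises $J$ by at least $\epsilon$ times the mass of unsaturated trajectories. Hence at any maximizer, or any stationary point approached by ascent, we must have $\Pr[L<L_{\max}]=0$, which forces $L_i\to L_{\max}$.

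For the dilution claim, the non-redundant content is unchanged by padding while $L_i$ grows. The ratio of informative steps to total length, which is the ``logical density'', therefore tends to its minimum under the constraint.

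\textbf{Main obstacle.} The delicate point is group standardization. The scores $\tilde{s}$ depend on the pooled set $\mathbf{S}$, so adding steps shifts the group mean and variance, and the per-step margin after standardization could in principle change. I will handle this by relying on the assumption exactly as stated: the lower bound $\epsilon>0$ is assumed \emph{after} standardization, uniformly over the induced groups. I will then check that the other steps' standardized scores do not decrease by more than is compensated. If that cannot be guaranteed, I will fall back to a first-order (fixed-statistic) analysis, treating $\mathbf{S}$'s mean and standard deviation as stop-gradient constants, as is done in GRPO implementations. Under that convention the additive argument goes through directly.
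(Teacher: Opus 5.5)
Your proposal is correct and is essentially the paper's own argument: the paper likewise pads a concise trajectory (suppressing EOS and appending $m$ redundant steps), uses that the shared prefix still contributes $R_{\text{valid}}$ while each added step contributes at least $\epsilon$, and iterates to the length limit, so your policy-level bound $J(\pi^{+})\ge J(\pi)+\epsilon\Pr_\pi[L<L_{\max}]$ is a tidier packaging of the same comparison than the paper's appeal to the policy gradient theorem. Two small remarks: the paper's claim that the prefix term is unchanged tacitly relies on your frozen-statistics fallback, which for a mean-centering $\mathcal{N}$ over the pooled set $\mathbf{S}$ is in fact unavoidable because then $\sum_{i=1}^{G}\hat{A}_{i,1}=0$ in every group (so your first plan cannot yield a net gain), and appending at the end as the paper does, rather than inserting before the answer chunk, makes your extra non-interference condition unnecessary for causally computed step scores.
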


\emph{Proof sketch.} Appending $m$ redundant steps to a valid, concise trajectory strictly increases the cumulative advantage by at least $m\epsilon$ without altering the prefix, so the policy gradient is monotonically increasing in $m$; the full argument appears in Appendix~\ref{app:proofs}.

To close this loophole, DL replaces the absolute cumulative sum with a length-attenuated logical-density score. Following the standard process-supervision aggregation used by GRPO~\citep{shao2024deepseekmath}, write $G_i^j = \sum_{j' \ge j} \tilde{A}_i^{b_{j'}}$ for the per-chunk return-to-go at chunk $j$ of trajectory $o_i$. DL attenuates each $G_i^j$ by the residual chunk count $(K_i - j + 1)^k$ rather than by the absolute sequence length:
\begin{equation}\label{eq:prcvdl_dl_path_score}
\tilde{A}_i^{b_j,\,\text{DL}} = \frac{G_i^j}{(K_i - j + 1)^k} = \frac{\sum_{j' \ge j} \tilde{A}_i^{b_{j'}}}{(K_i - j + 1)^k}, \qquad k \ge 0,
\end{equation}
which, at $j=1$, recovers a trajectory-level average-value-density score $S(o_i) = \big(\sum_{j=1}^{K_i} \tilde{A}_i^{b_j}\big)/(K_i)^k$. The DL-attenuated chunk-level value $\tilde{A}_i^{b_j,\,\text{DL}}$ is then broadcast back to every token inside the chunk via the same rule as Equation~\ref{eq:prcvdl_cv_advantage_broadcast}, so tokens within one chunk continue to share an identical credit. Working with the residual chunk count rather than the residual token length keeps DL consistent with the chunk-by-chunk credit assignment of CV and avoids re-introducing a token-position dependence that CV has just eliminated. The exponent $k$ interpolates between no-penalty ($k \to 0$) and strict average-reward MDP normalization ($k = 1$): small $k$ preserves the benefit of genuinely long, hard deductions; large $k$ aggressively discounts redundancy. All multi-hop QA runs in this paper fix $k = 0.7$; the sensitivity of the mathematical-reasoning results to $k$ is ablated in Appendix~\ref{app:math_sensitivity} (Table~\ref{tab:math_ablation_k}).

\paragraph{A marginal-value reading of DL.} To make explicit what DL does and does not promise, we analyze its effect at the path-score level $S(o_i) = \big(\sum_{j=1}^{K_i} \tilde{A}_i^{b_j}\big)/(K_i)^k$ from Equation~\ref{eq:prcvdl_dl_path_score}. Consider appending one extra chunk of signed marginal contribution $\delta$ to a trajectory $o_i$ with $K_i$ chunks whose current path score is $S(o_i)$. Under the summation objective the resulting change in $S(o_i)$ is simply $\delta$, which is always non-negative whenever $\delta \ge 0$. Under the DL objective, to first order in $1/K_i$,
\begin{equation}\label{eq:prcvdl_dl_marginal}
\Delta S(o_i) \;\approx\; \frac{1}{(K_i)^{k}}\left(\delta \;-\; k \cdot \frac{S(o_i)}{K_i}\right),
\end{equation}
so the sign of $\Delta S(o_i)$ depends on whether the marginal contribution $\delta$ exceeds $k$ times the current average value density $S(o_i)/K_i$. DL therefore acts as a \emph{two-sided marginal-value filter}: it admits additional steps only when they raise the trajectory's average value density and rejects them otherwise. Crucially, the sign of $\delta$ in Equation~\ref{eq:prcvdl_dl_marginal} is not fixed by DL itself. When the regime is dominated by positive-biased redundant steps, DL suppresses them (its length-inflation defusal role noted after Theorem~\ref{thm:length_collapse}); when the regime is dominated by truncation of a trajectory that is still above its current density threshold, DL admits the depth-extending step. The two behaviors are symmetric consequences of the same admission criterion and require no algorithmic change across regimes. DL does \emph{not} promise that the optimized trajectory must be shorter than a baseline trajectory; a policy that lengthens its output while strictly increasing value density is still improving under DL. This distinction matters for the interpretation of our multi-hop QA experiments in \S\ref{sec:experiments}. DL is the PASS response to Pathology~3 (the cumulative trap); its pseudocode is given in Algorithm~\ref{alg:dl}.

\paragraph{From process signal to advantage.} Putting the three pieces together, PASS transforms an arbitrary process signal $s_{\text{prc}}$ into a per-token advantage $\tilde{A}_{i,t}$ via: (1) independent group standardization of the three streams, (2) piecewise Advantage Fusion (Equation~\ref{eq:prcvdl_advantage_shaping}), (3) chunk-level broadcasting along $\mathcal{D}$ (Equation~\ref{eq:prcvdl_cv_advantage_broadcast}), (4) formation of the per-chunk return-to-go $G_i^j$ in the standard GRPO process-supervision sense, and (5) the residual chunk-count divisor $(K_i - j + 1)^k$ from DL (Equation~\ref{eq:prcvdl_dl_path_score}) on top of $G_i^j$, with the resulting density-rescaled chunk value re-broadcast to every token inside the chunk. Each process signal $s_{\text{prc}}$ must first pass through a group-level standardization operator $\mathcal{N}(\cdot; \mathbf{S})$. Two concrete instantiations of this operator are used in our experiments, Masked-Norm and Abs-Max Scaling; the interaction between the choice of $\mathcal{N}(\cdot; \mathbf{S})$ and PASS is analyzed empirically in Appendix~\ref{app:kl_operators}. A complete algorithmic specification of the three operators and their end-to-end composition is given in Appendix~\ref{app:algorithms} (Algorithms~\ref{alg:af}--\ref{alg:pass_pipeline}).

\section{Experiments}
\label{sec:experiments}

We evaluate PASS on two intentionally dissimilar domains to test the claim that PASS is a paradigm-agnostic middleware, i.e., that its effectiveness does not hinge on the specific construction of the upstream process signal. The two subsections that follow are deliberately orthogonal in what they exercise. \S\ref{sec:exp_math} pairs PASS with a learned PRM on mathematical reasoning: here the process signal is emitted at heuristic step boundaries and is piecewise constant within a step, so this subsection probes whether PASS can consume a discrete, coarse, boundary-aligned signal produced by an external scorer and convert it into a competitive advantage for the policy. \S\ref{sec:exp_hotpotqa} pairs PASS with a token-level reverse-KL signal obtained from on-policy distillation on multi-hop question answering: here $s_{\text{prc}}$ fluctuates at every token and has no natural step structure, so this subsection probes whether the same middleware can consume a dense, high-frequency, continuous signal produced implicitly by a teacher policy. Taken together, the two subsections are the two ends of the ``process-signal paradigm'' axis that the introduction identifies, and an improvement that is visible in both cases is what we read as evidence of paradigm-agnostic behavior rather than of fitting to a specific signal construction. A detailed sensitivity study on the DL length exponent $k$ and on the scale of the upstream PRM is deferred to Appendix~\ref{app:math_sensitivity}; an alternative group-standardization operator (Abs-Max Scaling) for the KL-style setting and a joint length--accuracy analysis across both operators are deferred to Appendix~\ref{app:kl_operators}.

\subsection{Mathematical Reasoning with a PRM Process Signal}
\label{sec:exp_math}

\paragraph{Setup.} We instantiate $s_{\text{prc}}$ as a Qwen2.5-Math-PRM-7B~\citep{zhang2025lessons} step-level score on Qwen2.5-Math-7B~\citep{yang2024qwen25mathtechnicalreportmathematical} as the actor, train on DeepScaleR~\citep{deepscaler2025}, and evaluate on AIME24/25, AMC23, GSM8K, MATH, Minerva, and Olympiad. Metrics are pass@1 / pass@8 (\%) measured with the \texttt{math\_verify} checker~\citep{kydlicek2025MathVerifyMathVerification}. Further dataset and optimizer details follow~\citet{deepscaler2025} and are omitted for space.

\paragraph{Main results.} Table~\ref{tab:math_main} reports PASS against the untuned base model and a standard outcome-reward GRPO baseline. With the PRM as $s_{\text{prc}}$, PASS improves average pass@1 by an absolute $5.9$ points over GRPO(ORM) ($29.8 \to 35.7$) and lifts pass@8 by $1.9$ points ($53.1 \to 55.0$), with especially large gains on competition-level benchmarks (AIME25 pass@1 $5.5 \to 9.8$, AMC23 pass@1 $47.0 \to 53.8$).

\begin{table}[htbp]
\centering
\caption{PASS vs.\ a base model and outcome-reward GRPO on mathematical reasoning. Metrics are pass@1\,/\,pass@8 (\%). Averaged over the seven benchmarks, PASS improves pass@1 by $+5.9$ absolute over GRPO(ORM).}
\label{tab:math_main}
\resizebox{\textwidth}{!}{%
\begin{tabular}{@{}lcccccccc@{}}
\toprule
\textbf{Model} & \textbf{AIME24} & \textbf{AIME25} & \textbf{AMC23} & \textbf{GSM8K} & \textbf{MATH} & \textbf{Minerva} & \textbf{Olympiad} & \textbf{Average} \\
\midrule
Base Model (Qwen2.5-Math-7B) & 8.9/33.2 & 2.3/13.4 & 22.8/70.4 & 30.1/83.2 & 27.9/64.6 & 8.4/33.7 & 4.1/14.6 & 14.9/44.7 \\
GRPO(ORM)                    & 13.2/37.6 & 5.5/21.6 & 47.0/83.5 & 64.9/94.3 & 48.5/70.2 & 19.8/45.0 & 9.7/19.8 & 29.8/53.1 \\
\rowcolor{mylightblue}
\textbf{PASS (Ours)}          & \textbf{14.8/39.1} & \textbf{9.8/27.4} & \textbf{53.8/83.7} & \textbf{77.5/95.4} & \textbf{55.1/71.3} & \textbf{27.3/47.2} & \textbf{11.5/21.1} & \textbf{35.7/55.0} \\
\bottomrule
\end{tabular}%
}
\end{table}

\paragraph{Structural ablation: CV and DL.} Table~\ref{tab:math_ablation_cvdl} ablates the two downstream PASS operators, which probes Pathology~2 (resolution mismatch) and Pathology~3 (cumulative trap) jointly. Removing DL (PASS w/o DL) triggers length-driven reward hacking and collapses average pass@1 by $10.9$ points ($35.7 \to 24.8$), exhibiting the positive-margin projection of the cumulative trap (Theorem~\ref{thm:length_collapse}). Removing CV (PASS w/o CV) preserves DL's length-hacking safeguard but sacrifices the high-signal-to-noise credit assignment afforded by chunk-level aggregation, dropping average pass@1 by $3.1$ points ($35.7 \to 32.6$) and pass@8 by $4.6$ points ($55.0 \to 50.4$), consistent with the resolution-mismatch reading of CV. The full PASS achieves the best pass@1 on all seven benchmarks (tied with PASS w/o CV on Minerva) and the best pass@8 on six of seven benchmarks.

\begin{table}[htbp]
\centering
\caption{CV / DL structural ablation on mathematical reasoning. PASS w/o DL exhibits length inflation and collapses accuracy; PASS w/o CV retains DL's safeguard but leaves a visible gap on both pass@1 and pass@8; the full PASS attains the best averages on both metrics.}
\label{tab:math_ablation_cvdl}
\resizebox{\textwidth}{!}{%
\begin{tabular}{@{}lcccccccc@{}}
\toprule
\textbf{Configuration} & \textbf{AIME24} & \textbf{AIME25} & \textbf{AMC23} & \textbf{GSM8K} & \textbf{MATH} & \textbf{Minerva} & \textbf{Olympiad} & \textbf{Average} \\
\midrule
PASS w/o CV (AF + DL)           & 8.4/30.0  & 3.3/14.0 & 51.1/79.0          & 76.8/95.1          & 51.8/68.8          & \textbf{27.3}/\textbf{47.8} & 9.5/17.8         & 32.6/50.4 \\
PASS w/o DL (AF + CV)           & 3.7/19.0  & 1.7/8.7  & 37.6/73.6          & 59.4/90.6          & 41.5/64.3          & 22.9/44.6          & 7.1/15.6         & 24.8/45.2 \\
\rowcolor{mylightblue}
\textbf{PASS (AF + CV + DL)}    & \textbf{14.8/39.1} & \textbf{9.8/27.4} & \textbf{53.8/83.7} & \textbf{77.5/95.4} & \textbf{55.1/71.3} & \textbf{27.3}/47.2 & \textbf{11.5/21.1} & \textbf{35.7/55.0} \\
\bottomrule
\end{tabular}%
}
\end{table}

\paragraph{Advantage Fusion vs.\ naive reward shaping.} A natural alternative to the AF operator of Equation~\ref{eq:prcvdl_advantage_shaping} is to linearly combine the raw process, outcome, and format rewards before group standardization, which is the default recipe that Pathology~1 (channel contamination) warned against. Table~\ref{tab:math_ablation_shaping} compares the two choices while keeping CV and DL fixed at $k=1.0$. Advantage Fusion outperforms the linear reward-shaping baseline by $+3.2$ points on average pass@1 and $+3.2$ on pass@8, with the largest gap on AIME25 ($3.9 \to 9.8$ pass@1). The gap is consistent with the motivation of \S\ref{subsec:method_shaping}: pooling a near-binary outcome signal with a continuous step-level signal before standardization inflates the outcome variance and attenuates the gradient contribution of $s_{\text{prc}}$, whereas independently standardizing the three streams preserves the step-level gradient component and additionally makes the format indicator a hard gate rather than a soft summand.

\begin{table}[htbp]
\centering
\caption{Advantage Fusion (Equation~\ref{eq:prcvdl_advantage_shaping}) versus a naive linear reward-shaping baseline of the form $R = w_{\text{prm}} r_{\text{prm}} + w_{\text{out}} r_{\text{out}} + w_{\text{fmt}} r_{\text{fmt}}$. Both variants use CV and DL with $k=1.0$. Metrics are pass@1\,/\,pass@8 (\%).}
\label{tab:math_ablation_shaping}
\resizebox{\textwidth}{!}{%
\begin{tabular}{@{}lcccccccc@{}}
\toprule
\textbf{Fusion Method} & \textbf{AIME24} & \textbf{AIME25} & \textbf{AMC23} & \textbf{GSM8K} & \textbf{MATH} & \textbf{Minerva} & \textbf{Olympiad} & \textbf{Average} \\
\midrule
Reward Shaping                  & 11.1/36.2 & 3.9/15.0 & 49.1/81.8 & 75.1/94.8 & 51.5/68.9 & \textbf{27.4/47.3} & 9.4/18.4 & 32.5/51.8 \\
\rowcolor{mylightblue}
\textbf{Advantage Fusion}        & \textbf{14.8/39.1} & \textbf{9.8/27.4} & \textbf{53.8/83.7} & \textbf{77.5/95.4} & \textbf{55.1/71.3} & 27.3/47.2 & \textbf{11.5/21.1} & \textbf{35.7/55.0} \\
\bottomrule
\end{tabular}%
}
\end{table}

\subsection{Multi-hop Question Answering with a KL-style Process Signal}
\label{sec:exp_hotpotqa}

\paragraph{Setup.} We evaluate PASS on three multi-hop question-answering benchmarks: HotpotQA~\citep{yang2018hotpotqa}, 2WikiMultihopQA~\citep{ho2020constructing}, and MuSiQue~\citep{trivedi2022musique}. Training follows the HotpotQA distractor setup on a 5k-prompt subset of its training split, and evaluation uses the full development splits of all three datasets with $16$ sampled responses per task. The actor is Qwen2.5-7B-Instruct~\citep{qwen2025qwen25} and the teacher is Qwen2.5-32B-Instruct. We instantiate $s_{\text{prc}}$ in two ways: (i) \emph{OPD}, the token-level negative KL against the teacher~\citep{agarwal2023onpolicy}, and (ii) \emph{G-OPD}, the $\lambda$-shifted variant of Equation~\ref{eq:prcvdl_gopd_signal} anchored on the actor's initial weights as $\pi_{\text{base}}$, with $\lambda = 1.25$. Throughout this subsection we use Masked-Norm as the group-level standardization operator $\mathcal{N}(\cdot; \mathbf{S})$, i.e., we subtract the masked group mean and divide by the masked standard deviation of $s_{\text{prc}}$; this is the same operator adopted in the mathematical-reasoning setup of \S\ref{sec:exp_math}. All eight multi-hop QA runs---four reported in this subsection and four reported in Appendix~\ref{app:kl_operators}---share the same optimizer, sampler, and rollout configuration (global batch size $128$, maximum sequence length $6{,}144$, $16$ rollouts per prompt, learning rate $10^{-6}$, two training epochs), and evaluation numbers are reported at the end of training. Per-run hyperparameter differences are given in Appendix~\ref{app:hparams}. We report pass@1 (exact-match answer accuracy under SQuAD-style normalization) on each dataset together with an unweighted average, and defer pass@8 and pass@16 to Appendix~\ref{app:hparams}.

Table~\ref{tab:hotpotqa_main} summarizes the Masked-Norm results; the two paired comparisons below probe the effectiveness of PASS on each process-signal paradigm.

\begin{table}[htbp]
\centering
\caption{pass@1 (\%) on three multi-hop QA benchmarks with the Masked-Norm standardization operator. PASS consistently improves the average pass@1 on both the OPD and the G-OPD process signals. $\Delta$ denotes the absolute improvement over the corresponding non-PASS baseline.}
\label{tab:hotpotqa_main}
\resizebox{0.8\textwidth}{!}{%
\begin{tabular}{@{}llccccc@{}}
\toprule
\textbf{Process signal} & \textbf{Method} & \textbf{MuSiQue} & \textbf{2Wiki} & \textbf{HotpotQA} & \textbf{Average} & \textbf{$\Delta$ Avg} \\
\midrule
    & GRPO(+OPD)                 & 37.68          & 63.25          & 62.03          & 54.32          & --     \\
\rowcolor{mylightblue}\cellcolor{white}\multirow{-2}{*}{OPD}
                        & \textbf{PASS + OPD}        & \textbf{48.50} & \textbf{66.81} & \textbf{64.14} & \textbf{59.82} & $+5.50$ \\
\midrule
  & GRPO(+G-OPD)               & 39.01          & 63.69          & 61.96          & 54.89          & --     \\
\rowcolor{mylightblue}\cellcolor{white}\multirow{-2}{*}{G-OPD}
                        & \textbf{PASS + G-OPD}      & \textbf{47.29} & \textbf{67.19} & \textbf{64.34} & \textbf{59.61} & $+4.72$ \\
\bottomrule
\end{tabular}
}
\end{table}

On the vanilla OPD signal, attaching PASS raises pass@1 from $37.68$ to $48.50$ on MuSiQue, from $63.25$ to $66.81$ on 2Wiki, and from $62.03$ to $64.14$ on HotpotQA, for an average absolute gain of $+5.50$ points ($54.32 \to 59.82$). The largest improvement appears on MuSiQue, the most challenging four-hop benchmark in the group, which is consistent with chunk-level credit assignment benefiting tasks whose solution trajectories contain more stable logical segments. On the G-OPD variant, PASS again produces a consistent improvement across all three benchmarks (MuSiQue $39.01 \to 47.29$; 2Wiki $63.69 \to 67.19$; HotpotQA $61.96 \to 64.34$), yielding an average pass@1 of $59.61$ and an absolute gain of $+4.72$ points. The magnitude of the G-OPD gain tracks that of OPD closely, indicating that the effectiveness of PASS does not depend on the specific construction of the KL-style signal, and the gains PASS achieves on mathematical reasoning under a PRM signal transfer to multi-hop QA under a KL-style signal without algorithmic changes. The trend on pass@1 carries over to the pass@8 and pass@16 numbers tabulated in Appendix~\ref{app:hparams}.
\paragraph{Robustness to the choice of standardization operator.} We have reported PASS on top of Masked-Norm, the same group-standardization operator adopted in the mathematical-reasoning setup of \S\ref{sec:exp_math}. The two KL-style signals, however, differ from PRM scores in one specific respect: OPD and G-OPD carry an implicit or explicit origin that is not located at the within-group mean, so a natural question is whether the improvement we report survives a different group-standardization operator that respects this origin. We verify that it does. Specifically, we repeat all four configurations of Table~\ref{tab:hotpotqa_main} with Abs-Max Scaling, a standardization operator that divides each entry of $s_{\text{prc}}$ by the group-wide maximum absolute value and leaves the signal's zero point in place. Under Abs-Max Scaling the baselines are stronger and the residual room for PASS is therefore smaller, but PASS still delivers a non-negative average pass@1 gain under both OPD and G-OPD, so the main claim of this subsection---that PASS improves a GRPO baseline with a KL-style process signal on multi-hop QA---is not an artifact of the specific standardization operator we use in the main body. The Abs-Max Scaling results (Table~\ref{tab:hotpotqa_scaled}) together with a joint analysis of the length, entropy, and teacher-KL dynamics induced by each operator (Figures~\ref{fig:opd_length_dynamics},~\ref{fig:opd_entropy_dynamics},~\ref{fig:opd_teacher_kl_dynamics}, Table~\ref{tab:length_matrix}), as well as a mechanism-level reading of how each operator interacts with DL's two-sided marginal-value filter, are deferred to Appendix~\ref{app:kl_operators}.

\section{Discussion}
\label{sec:discussion}

\paragraph{Beyond the sourcing of the process signal.} The central value proposition of PASS is that its three operators, Advantage Fusion, Chunk-by-Value, and Divide-Length, act on a generic step-level scalar $s_{\text{prc}}$, regardless of whether that scalar is produced by a learned PRM or by a teacher-anchored KL divergence. Our mathematical-reasoning and multi-hop QA experiments are a first step in validating this scope; extending the same middleware to more diverse process signals (e.g., rubric-based LLM judges or execution-based reward models) is a natural follow-up that our formulation readily accommodates.

\paragraph{Domain specificity and credit-assignment boundaries.} Our strongest evidence comes from mathematical reasoning, where logical steps admit near-deterministic verification and CV's value-homogeneous chunking is therefore especially natural. On open-ended or multi-turn tasks, chunk boundaries and what counts as a ``correct'' step are softer concepts; deploying PASS in those regimes will benefit from dynamic, rubric-style signal generators that can still be exposed through the $s_{\text{prc}}$ interface.

\section{Conclusion}
\label{sec:conclusion}

PASS (Process Advantage Signal Shaping) is a thin middleware between a generic step-level process signal and GRPO's clipped surrogate, whose three operators each intervene at a different stage of GRPO's advantage pipeline---aggregation, broadcast, and trajectory summation---to close the three structural pathologies identified in \S\ref{sec:introduction}. The same unmodified middleware yields consistent gains over GRPO across two process-signal paradigms (a learned PRM for mathematical reasoning; on-policy-distillation and G-OPD KL signals for multi-hop QA) and two group-standardization operators (Masked-Norm, Abs-Max). A natural next step, directly enabled by PASS's signal-agnostic interface, is to replace the dedicated PRM with a frozen LLM-as-judge at training time, amortizing the critic's cognitive capacity without the cost of training ever-larger dedicated scorers.


\bibliographystyle{unsrtnat}  
\bibliography{references}  

\appendix

\section{Proof of the Length Collapse Theorem}
\label{app:proofs}

We restate Assumption~\ref{assump:redundancy} and Theorem~\ref{thm:length_collapse} and give the detailed argument deferred from \S\ref{subsec:method_dl}.

\begin{proof}[Proof of Theorem~\ref{thm:length_collapse}]
Let $\tau_i$ be a valid, concise reasoning trajectory generated by the current policy $\pi$ that correctly derives the answer and emits an end-of-sequence (EOS) token at step $k \ll K_{\max}$. The initial-token cumulative advantage of $\tau_i$ under Equation~\ref{eq:pre_cumulative_advantage} is
\begin{equation}
\hat{A}_{i,1}(\tau_i) \;=\; \sum_{j=1}^{k} \tilde{s}_{i}^{\mathrm{index}(j)} \;\triangleq\; R_{\text{valid}}.
\end{equation}

Consider a perturbed policy $\pi'$ that replicates $\tau_i$ up to step $k$ but suppresses the EOS token and, instead, appends $m$ redundant steps satisfying Assumption~\ref{assump:redundancy}, with $k + m \le K_{\max}$. Write the resulting augmented trajectory as $\tau_{i'}$. Its initial-token cumulative advantage is
\begin{equation}
\hat{A}_{i',1}(\tau_{i'}) \;=\; \sum_{j=1}^{k} \tilde{s}_{i'}^{\mathrm{index}(j)} + \sum_{j=k+1}^{k+m} \tilde{s}_{i'}^{\mathrm{index}(j)}.
\end{equation}
Because the first $k$ steps are shared, the prefix term equals $R_{\text{valid}}$. By Assumption~\ref{assump:redundancy} the remaining $m$ terms are each bounded below by $\epsilon > 0$, so
\begin{equation}
\hat{A}_{i',1}(\tau_{i'}) \;\ge\; R_{\text{valid}} + m \cdot \epsilon \;>\; \hat{A}_{i,1}(\tau_{i})
\qquad \text{for all } m > 0.
\end{equation}
By the policy gradient theorem, the objective $J(\theta) = \mathbb{E}_{\pi_\theta}[\hat{A}_{i,1}]$ therefore receives a strictly positive gradient in the direction that increases $m$, i.e., suppresses EOS and appends additional redundant steps. Iterating this argument, the optimal policy drives $L_i$ toward $L_{\max}$, which proves the claim.
\end{proof}

The chunk-count-attenuated path score $S(o_i) = \big(\sum_{j=1}^{K_i} \tilde{A}_i^{b_j}\big)/(K_i)^k$ recovered at $j=1$ from Equation~\ref{eq:prcvdl_dl_path_score} neutralizes this gradient: for $k \in (0, 1]$, once $m$ is large enough the marginal gain $m\epsilon$ is dominated by the $(K_i)^k$ divisor, so appending further redundant chunks stops improving $S$ and can even reduce it (this is the explicit content of the marginal-value criterion in Equation~\ref{eq:prcvdl_dl_marginal}). This restores a finite optimum for $K_i$, and consequently for $L_i$, whenever the marginal process signal $\epsilon$ is small compared with the average density of genuinely logical chunks.

\section{Algorithmic Specification of PASS Operators}
\label{app:algorithms}

This appendix gives pseudocode for the three PASS operators introduced in \S\ref{sec:method}. All three algorithms take inputs that already come from the upstream preliminaries of \S\ref{sec:preliminaries}: a group of $G$ rollouts $\{o_1,\dots,o_G\}$, a step-level process signal $s_{\text{prc}}(i,j)$, sequence-level outcome and format rewards $r_{\text{out},i},r_{\text{fmt},i}$, and a group-level standardization operator $\mathcal{N}(\cdot;\mathbf{S})$. The three algorithms are composed in order (AF\,$\to$\,CV\,$\to$\,DL); their composition is summarized in Algorithm~\ref{alg:pass_pipeline}.

\begin{algorithm}[htbp]
\caption{Advantage Fusion (AF). Independent within-group standardization of the process, outcome, and format streams, followed by format-gated piecewise recombination. Implements Equation~\ref{eq:prcvdl_advantage_shaping}.}
\label{alg:af}
\begin{algorithmic}[1]
\Require Group rollouts $\{o_1,\dots,o_G\}$; per-step signal $s_{\text{prc}}(i,j)$; outcome $r_{\text{out},i}$; format $r_{\text{fmt},i}$; format indicator $\mathbb{I}_{\text{fmt},i}\in\{0,1\}$; mixing weights $w_{\text{prc}},w_{\text{out}},w_{\text{fmt}}$ with $W=w_{\text{prc}}+w_{\text{out}}+w_{\text{fmt}}$; standardization operator $\mathcal{N}(\cdot;\mathbf{S})$.
\Ensure Per-step fused advantage $\tilde{A}_i^{(t)}$ for every token $t$ in every rollout $o_i$.
\State $\mathbf{S}_{\text{prc}} \gets \{s_{\text{prc}}(i,j)\}_{i,j}$, $\mathbf{S}_{\text{out}}\gets\{r_{\text{out},i}\}_{i}$, $\mathbf{S}_{\text{fmt}}\gets\{r_{\text{fmt},i}\}_{i}$ \Comment{three independent channel sets}
\For{each rollout $i\in\{1,\dots,G\}$ and each step $j\in\{1,\dots,K_i\}$}
  \State $\tilde{s}_{\text{prc}}^{(t)} \gets \mathcal{N}(s_{\text{prc}}(i,j);\mathbf{S}_{\text{prc}})$ for $t$ inside step $j$ \Comment{process channel}
\EndFor
\For{each rollout $i\in\{1,\dots,G\}$}
  \State $\tilde{r}_{\text{out}}^{(t)} \gets \mathcal{N}(r_{\text{out},i};\mathbf{S}_{\text{out}})$ for all $t$ in $o_i$ \Comment{outcome channel, broadcast over tokens}
  \State $\tilde{r}_{\text{fmt}}^{(t)} \gets \mathcal{N}(r_{\text{fmt},i};\mathbf{S}_{\text{fmt}})$ for all $t$ in $o_i$ \Comment{format channel, broadcast over tokens}
\EndFor
\For{each rollout $i\in\{1,\dots,G\}$ and each token $t$ in $o_i$}
  \If{$\mathbb{I}_{\text{fmt},i}=1$} \Comment{format-compliant branch}
    \State $\tilde{A}_i^{(t)} \gets w_{\text{prc}}\,\tilde{s}_{\text{prc}}^{(t)} + w_{\text{out}}\,\tilde{r}_{\text{out}}^{(t)} + w_{\text{fmt}}\,\tilde{r}_{\text{fmt}}^{(t)}$
  \Else \Comment{format-violating branch: gate off prc/out, preserve gradient norm via $W$}
    \State $\tilde{A}_i^{(t)} \gets W\cdot\tilde{r}_{\text{fmt}}^{(t)}$
  \EndIf
\EndFor
\State \Return $\{\tilde{A}_i^{(t)}\}_{i,t}$
\end{algorithmic}
\end{algorithm}

\begin{algorithm}[htbp]
\caption{Chunk-by-Value (CV). Numerical-equality run-length boundary detection on $s_{\text{prc}}$, followed by chunk-level broadcasting of the AF-fused advantage. Implements Equation~\ref{eq:prcvdl_cv_advantage_broadcast}.}
\label{alg:cv}
\begin{algorithmic}[1]
\Require Per-step signal $s_{\text{prc}}(i,j)$ and its already-standardized form $\tilde{s}_{\text{prc}}^{(t)}$ from Algorithm~\ref{alg:af}; fused advantage $\tilde{A}_i^{(t)}$ from Algorithm~\ref{alg:af}; numerical tolerance $\eta=10^{-8}$; signal mode $\in\{\text{PRM},\text{KL}\}$; for the KL mode, an answer-region mask $M_i$.
\Ensure Per-token advantage $\tilde{A}_{i,t}$ (chunk-constant within each chunk) for every rollout $o_i$.
\If{signal mode $=\text{PRM}$} \Comment{PRM is piecewise constant within a reasoning step}
  \State $u_t \gets s_{\text{prc}}$ at token $t$, padded by PRM step boundaries for rollout $o_i$
\Else \Comment{KL-style: smooth and restrict to optimizable tokens}
  \State $\bar{u}_t \gets \frac{1}{G}\sum_{i'=1}^{G}\tilde{s}_{\text{prc}}^{(t)}(i')$ for each token position $t$ \Comment{group-averaged, already-standardized signal}
  \State $u_t \gets \bar{u}_t$ if $M_i[t]=1$ (inside the answer region); else treat as out-of-scope
\EndIf
\State Initialize boundaries $\mathcal{B}_i\gets\{\}$, previous value $v_{\text{prev}}\gets u_1$
\For{$t=2,\dots,L_i$}
  \If{$|u_t-v_{\text{prev}}|>\eta$ \textbf{or} $t$ crosses the PRM step boundary}
    \State Append $t-1$ to $\mathcal{B}_i$ \Comment{close previous chunk at token $t-1$}
    \State $v_{\text{prev}}\gets u_t$
  \EndIf
\EndFor
\State Append $L_i$ to $\mathcal{B}_i$ and write $\mathcal{B}_i=\{b_1,\dots,b_{K_i}\}$ with $b_{K_i}=L_i$
\For{each chunk index $j=1,\dots,K_i$}
  \State $\tilde{A}_i^{b_j} \gets \tilde{A}_i^{(b_j)}$ \Comment{evaluate AF-fused advantage at the boundary token}
  \For{each token $t$ with $b_{j-1}<t\le b_j$ (let $b_0\triangleq 0$)}
    \State $\tilde{A}_{i,t}\gets \tilde{A}_i^{b_j}$ \Comment{broadcast one value across the chunk}
  \EndFor
\EndFor
\State \Return $\{\tilde{A}_{i,t}\}_{i,t}$ and chunk structure $\{\mathcal{B}_i\}_i$
\end{algorithmic}
\end{algorithm}

\begin{algorithm}[htbp]
\caption{Divide-Length (DL). Trajectory-level chunk-count attenuation of the per-chunk return-to-go derived from the CV-broadcast advantage. Each chunk's RTG is divided by the residual chunk count $(K_i - j + 1)^k$ and the resulting density-rescaled chunk value is broadcast back to every token inside that chunk. Implements Equation~\ref{eq:prcvdl_dl_path_score}.}
\label{alg:dl}
\begin{algorithmic}[1]
\Require Per-token advantage $\tilde{A}_{i,t}$ from Algorithm~\ref{alg:cv}; chunk structure $\{\mathcal{B}_i\}_i$ with $\mathcal{B}_i = \{b_1,\dots,b_{K_i}\}$; chunk-level fused advantage $\tilde{A}_i^{b_j}$; exponent $k\in[0,1]$.
\Ensure DL-reshaped per-token advantage $\tilde{A}_{i,t}^{\text{DL}}$ and trajectory-level path score $S(o_i)$.
\For{each rollout $i\in\{1,\dots,G\}$}
  \For{each chunk index $j=1,\dots,K_i$}
    \State $G_i^j \gets \sum_{j' \ge j} \tilde{A}_i^{b_{j'}}$ \Comment{per-chunk return-to-go}
    \State $\tilde{A}_i^{b_j,\,\text{DL}} \gets \dfrac{G_i^j}{(K_i - j + 1)^{k}}$ \Comment{Equation~\ref{eq:prcvdl_dl_path_score}}
    \For{each token $t$ with $b_{j-1}<t\le b_j$ (let $b_0\triangleq 0$)}
      \State $\tilde{A}_{i,t}^{\text{DL}}\gets \tilde{A}_i^{b_j,\,\text{DL}}$ \Comment{broadcast one density-rescaled value per chunk}
    \EndFor
  \EndFor
  \State $S(o_i)\gets \dfrac{\sum_{j=1}^{K_i}\tilde{A}_i^{b_j}}{(K_i)^{k}}$ \Comment{trajectory-level density score}
\EndFor
\State \Return $\{\tilde{A}_{i,t}^{\text{DL}}\}_{i,t}$, $\{S(o_i)\}_i$
\end{algorithmic}
\end{algorithm}

\begin{algorithm}[htbp]
\caption{PASS: end-to-end composition of AF, CV, and DL. The output $\tilde{A}_{i,t}^{\text{DL}}$ is plugged into GRPO's clipped surrogate in place of the baseline cumulative advantage $\hat{A}_{i,t}$ of Equation~\ref{eq:pre_cumulative_advantage}.}
\label{alg:pass_pipeline}
\begin{algorithmic}[1]
\Require Group rollouts, raw signals, and hyperparameters from Algorithms~\ref{alg:af}--\ref{alg:dl}.
\Ensure PASS-reshaped per-token advantage $\{\tilde{A}_{i,t}^{\text{DL}}\}_{i,t}$ for the clipped surrogate.
\State $\{\tilde{A}_i^{(t)}\}\gets\text{AF}(\{o_i\},s_{\text{prc}},r_{\text{out}},r_{\text{fmt}},\mathbb{I}_{\text{fmt}},\{w_{\cdot}\},\mathcal{N})$ \Comment{Algorithm~\ref{alg:af}}
\State $(\{\tilde{A}_{i,t}\},\{\mathcal{B}_i\})\gets\text{CV}(\{\tilde{A}_i^{(t)}\},s_{\text{prc}},\tilde{s}_{\text{prc}}^{(t)},\text{mode},\eta)$ \Comment{Algorithm~\ref{alg:cv}}
\State $\{\tilde{A}_{i,t}^{\text{DL}}\}\gets\text{DL}(\{\tilde{A}_{i,t}\},\{\mathcal{B}_i\},\{\tilde{A}_i^{b_j}\},k)$ \Comment{Algorithm~\ref{alg:dl}}
\State \Return $\{\tilde{A}_{i,t}^{\text{DL}}\}_{i,t}$
\end{algorithmic}
\end{algorithm}

\section{Additional Ablations on Mathematical Reasoning}
\label{app:math_sensitivity}

This appendix reports two sensitivity studies on mathematical reasoning that complement the main-body ablations of \S\ref{sec:exp_math}: the length-decay exponent $k$ of the DL rule (Equation~\ref{eq:prcvdl_dl_path_score}) and the scale of the upstream PRM.

\paragraph{Length decay exponent $k$.} The DL rule is a one-parameter family indexed by the length exponent $k \in (0,1]$, interpolating between strict average-reward normalization ($k{=}1$) and no length attenuation ($k \to 0$). Table~\ref{tab:math_ablation_k} sweeps $k \in \{1.0, 0.7, 0.5, 0.3\}$. A relaxation from the strict $k{=}1.0$ toward $k{=}0.7$ protects valid long deductions, lifts average pass@1 from $35.7$ to $36.9$, and moves the ceiling from $55.0$ to $55.7$ pass@8. Further relaxation to $k{=}0.5$ or $k{=}0.3$ does not yield an additional pass@1 improvement over $k{=}0.7$, although pass@8 continues to rise marginally. We accordingly set $k=0.7$ as the default for all subsequent experiments, including the multi-hop QA runs of \S\ref{sec:exp_hotpotqa}.

\begin{table}[htbp]
\centering
\caption{Sensitivity of PASS to the length-decay exponent $k$ in Equation~\ref{eq:prcvdl_dl_path_score}. Metrics are pass@1\,/\,pass@8 (\%). Setting $k{<}1.0$ protects necessary reasoning verbosity and raises the exploratory ceiling; $k{=}0.7$ achieves the best average pass@1 and is used as the default throughout this paper.}
\label{tab:math_ablation_k}
\resizebox{\textwidth}{!}{%
\begin{tabular}{@{}lcccccccc@{}}
\toprule
\textbf{Decay Factor} & \textbf{AIME24} & \textbf{AIME25} & \textbf{AMC23} & \textbf{GSM8K} & \textbf{MATH} & \textbf{Minerva} & \textbf{Olympiad} & \textbf{Average} \\
\midrule
$k=1.0$ (Strict DL)   & 14.8/39.1 & \textbf{9.8}/27.4 & 53.8/\textbf{83.7} & 77.5/\textbf{95.4} & 55.1/71.3 & 27.3/47.2 & 11.5/21.1 & 35.7/55.0 \\
\rowcolor{mylightblue}
\textbf{$k=0.7$}      & \textbf{17.8/47.1} & 9.3/24.7 & \textbf{56.2}/81.8 & \textbf{78.3}/95.2 & \textbf{55.7/71.4} & \textbf{29.1/48.6} & \textbf{11.8/21.2} & \textbf{36.9}/55.7 \\
$k=0.5$ (Sqrt-L)      & 15.9/44.2 & 9.9/\textbf{27.9} & 53.4/82.6 & 74.7/95.2 & 52.8/71.0 & 27.6/48.2 & 11.1/21.2 & 35.1/55.8 \\
$k=0.3$               & 15.7/45.9 & 9.4/26.7 & 54.1/83.0 & 77.1/95.4 & 54.8/71.2 & 28.5/48.1 & 11.5/20.9 & 35.9/\textbf{55.9} \\
\bottomrule
\end{tabular}%
}
\end{table}

\paragraph{PRM scale.} A natural question is whether the actor's ceiling is bounded by the resolution of the process signal itself. Table~\ref{tab:math_ablation_prm} replaces the 7B PRM with the 72B PRM of~\citet{zhang2025lessons}, keeping all other components of the pipeline (including $k{=}1.0$ for a direct match with Table~\ref{tab:math_main}) fixed. The stronger PRM raises average pass@1 from $35.7$ to $37.3$ and pass@8 from $55.0$ to $55.8$, with the largest gains on AIME24 ($14.8 \to 17.2$) and AMC23 ($53.8 \to 57.2$) pass@1. Within the regime we tested, this monotone trend across the 7B-to-72B PRM scaling indicates that PASS demonstrates robust performance scaling across PRMs of different capacities, and benefits from the higher resolution of larger reward models without requiring algorithmic modification; we report this as a robustness property of the middleware on the specific scales we evaluated rather than as a strong universal scalability claim, as the present experiments do not extend beyond the 72B PRM tier nor to fundamentally different verifier architectures.

\begin{table}[htbp]
\centering
\caption{Effect of scaling the PRM from 7B to 72B parameters, with all other components of PASS fixed at the Table~\ref{tab:math_main} configuration. Metrics are pass@1\,/\,pass@8 (\%).}
\label{tab:math_ablation_prm}
\resizebox{\textwidth}{!}{%
\begin{tabular}{@{}lcccccccc@{}}
\toprule
\textbf{PRM Scale} & \textbf{AIME24} & \textbf{AIME25} & \textbf{AMC23} & \textbf{GSM8K} & \textbf{MATH} & \textbf{Minerva} & \textbf{Olympiad} & \textbf{Average} \\
\midrule
7B PRM                 & 14.8/39.1 & \textbf{9.8}/27.4 & 53.8/\textbf{83.7} & 77.5/95.4 & 55.1/71.3 & 27.3/47.2 & 11.5/21.1 & 35.7/55.0 \\
\rowcolor{mylightblue}
\textbf{72B PRM}       & \textbf{17.2/43.7} & 9.1/\textbf{27.4} & \textbf{57.2}/83.5 & \textbf{79.8/95.5} & \textbf{56.9/71.6} & \textbf{29.2/47.5} & \textbf{12.0/21.5} & \textbf{37.3/55.8} \\
\bottomrule
\end{tabular}%
}
\end{table}

\section{Abs-Max Scaling and Length--Accuracy Dynamics}
\label{app:kl_operators}

This appendix gathers the detailed results for the alternative standardization operator (Abs-Max Scaling) and the joint length--accuracy analysis across both standardization operators. The main-body §\ref{sec:exp_hotpotqa} establishes the core comparison under Masked-Norm; what follows documents the complementary Abs-Max Scaling runs and the length--accuracy profiles of all eight configurations.

\subsection{An Alternative Standardization Operator: Abs-Max Scaling}
\label{sec:exp_hotpotqa_scaled}

\paragraph{Motivation.} The Masked-Norm operator used in \S\ref{sec:exp_hotpotqa} is a standard Z-score normalization that subtracts the within-group mean and divides by the within-group standard deviation. This choice is natural for PRM scores, whose within-group values are approximately symmetric around a data-dependent reference level, so subtracting their mean does not displace an intrinsic reference point of the signal. For KL-style signals the situation is different. A token-level negative reverse KL against a stronger teacher (the OPD signal) is non-positive by construction: the signal has no meaningful origin of its own and is better thought of as a single-directional friction field, in which zero only means ``no penalty'' rather than ``neutral.'' The G-OPD variant of Equation~\ref{eq:prcvdl_gopd_signal} behaves differently again: by subtracting a base-policy term, G-OPD introduces an intrinsic zero that separates tokens on which the teacher is more confident than the base (a ``positive'' cognitive increment), tokens on which teacher and base agree (a ``neutral'' region near zero), and tokens on which the base is more confident than the teacher (a ``negative'' region). For both KL-style signals, Masked-Norm replaces the implicit or explicit origin of the raw signal with the group mean and recenters the distribution at zero regardless of where the original zero lay. We therefore study a second standardization operator, \emph{Abs-Max Scaling}, which divides each entry of $s_{\text{prc}}$ by the group-wide maximum absolute value and leaves the mean untouched. Abs-Max Scaling brings $s_{\text{prc}}$ to a bounded numeric range comparable to Masked-Norm while preserving the sign of each entry and the relative position of its zero point, and is the natural candidate for an operator that respects the origin of a KL-style signal.

\paragraph{Setup.} We repeat the four configurations of Table~\ref{tab:hotpotqa_main} with $\mathcal{N}(\cdot; \mathbf{S})$ replaced by Abs-Max Scaling, keeping all other optimizer, sampling, and architectural settings identical. All eight configurations are summarized in Appendix~\ref{app:hparams}.

\begin{table}[htbp]
\centering
\caption{pass@1 (\%) on the three multi-hop QA benchmarks with the Abs-Max Scaling standardization operator. PASS retains a positive improvement on both OPD and G-OPD; the baselines themselves are stronger than their Masked-Norm counterparts in Table~\ref{tab:hotpotqa_main}, which reduces the absolute headroom for PASS in this setting.}
\label{tab:hotpotqa_scaled}
\resizebox{0.9\textwidth}{!}{%
\begin{tabular}{@{}llccccc@{}}
\toprule
\textbf{Process signal} & \textbf{Method} & \textbf{MuSiQue} & \textbf{2Wiki} & \textbf{HotpotQA} & \textbf{Average} & \textbf{$\Delta$ Avg} \\
\midrule
    & GRPO(+OPD, scaled)                 & 43.30          & 65.53          & 63.04          & 57.29          & --     \\
\rowcolor{mylightblue}\cellcolor{white}\multirow{-2}{*}{OPD}
                        & \textbf{PASS + OPD (scaled)}       & \textbf{44.31} & 65.34          & 62.61          & \textbf{57.42} & $+0.13$ \\
\midrule
  & GRPO(+G-OPD, scaled)               & 42.19          & 64.32          & 62.16          & 56.22          & --     \\
\rowcolor{mylightblue}\cellcolor{white}\multirow{-2}{*}{G-OPD}
                        & \textbf{PASS + G-OPD (scaled)}     & \textbf{43.77} & \textbf{64.66} & \textbf{62.48} & \textbf{56.97} & $+0.75$ \\
\bottomrule
\end{tabular}
}
\end{table}

\paragraph{Observation.} Table~\ref{tab:hotpotqa_scaled} reports the Abs-Max Scaling results. Two features are worth noting. First, the Abs-Max baselines are systematically stronger than the Masked-Norm baselines: GRPO(+OPD, scaled) improves the average over GRPO(+OPD) from $54.32$ to $57.29$, and GRPO(+G-OPD, scaled) improves the average over GRPO(+G-OPD) from $54.89$ to $56.22$. This confirms that Abs-Max Scaling is a reasonable choice for KL-style signals in its own right. Second, on top of these stronger baselines, PASS still delivers positive average gains under both process signals ($+0.13$ on OPD, $+0.75$ on G-OPD), though the gains are smaller in absolute terms than those observed under Masked-Norm. We therefore report PASS's effectiveness as consistent across both standardization operators, while deferring the quantitative interpretation of the gain-gap to \S\ref{sec:exp_len_acc}, where we analyze it jointly with the response-length behavior.

\subsection{Length--Accuracy Dynamics across Standardization Operators}
\label{sec:exp_len_acc}

The Masked-Norm results of \S\ref{sec:exp_hotpotqa} and the Abs-Max Scaling results of \S\ref{sec:exp_hotpotqa_scaled} report two different profiles on the same four PASS/baseline pairs: Masked-Norm achieves larger pass@1 gains, while Abs-Max Scaling starts from stronger baselines but leaves smaller residual room for PASS. We now examine these runs jointly through the lens of their response length, first at training time and then at deployment time. In the terminology of \S\ref{subsec:method_dl}, the DL rule is a two-sided marginal-value filter: on the multi-hop QA panel its admissible region contains trajectories whose marginal step contribution exceeds the current average density of the path, and whether this region corresponds to shorter or longer responses than a baseline trajectory is a regime-dependent question rather than a design choice.

\begin{figure}[htbp]
\centering
\includegraphics[width=0.85\linewidth]{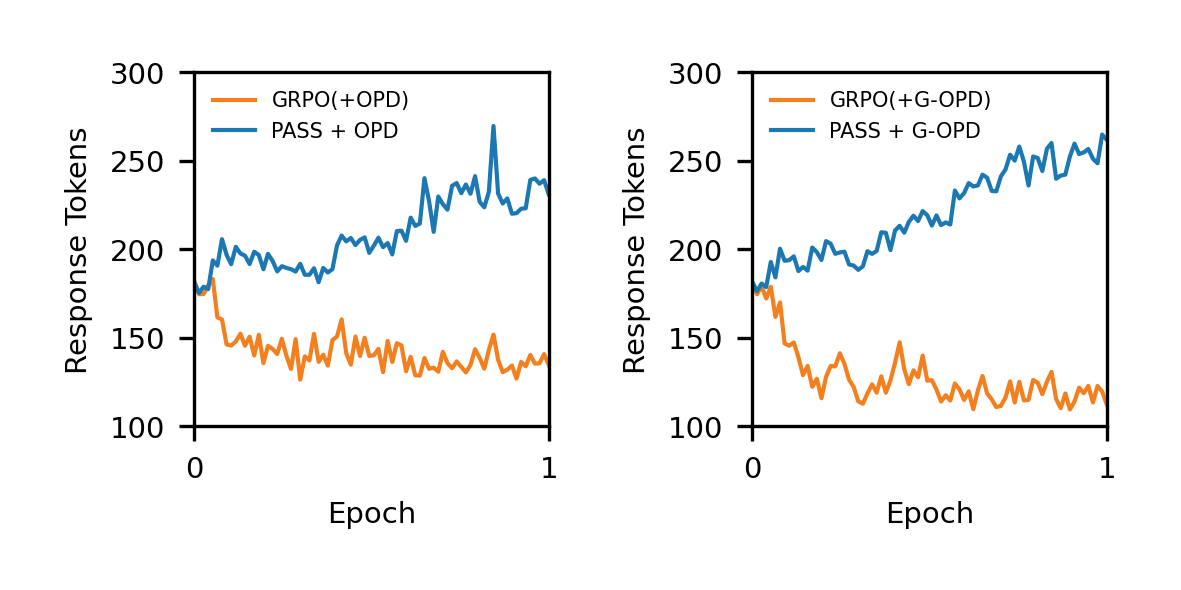}
\caption{Training-time mean rollout length (in tokens) on HotpotQA across one training epoch, for the four Masked-Norm configurations of Table~\ref{tab:hotpotqa_main}. \textbf{Left:} OPD---GRPO(+OPD) vs.\ PASS\,+\,OPD. \textbf{Right:} G-OPD---GRPO(+G-OPD) vs.\ PASS\,+\,G-OPD. All four runs start from the same actor at a rollout length of roughly $180$ tokens. The two baseline runs drift downward to $115$--$135$ tokens, while the two PASS runs drift upward to $235$--$260$ tokens. This figure reports rollout-time tokens and is complementary to the deployment-time token counts in Table~\ref{tab:length_matrix}.}
\label{fig:opd_length_dynamics}
\end{figure}

\paragraph{Training-time rollout length.} Figure~\ref{fig:opd_length_dynamics} shows the mean rollout length of the four Masked-Norm configurations across one training epoch on HotpotQA training rollouts. The four runs start from the same actor and therefore share a common starting rollout length of roughly $180$ tokens. Over training, the two non-PASS runs (GRPO(+OPD) and GRPO(+G-OPD)) decrease monotonically to $115$--$135$ tokens, shortening relative to the starting point rather than extending it. The two PASS runs, by contrast, grow from the same $180$-token starting length to $235$--$260$ tokens over the same number of optimizer steps. The two trajectories therefore diverge from a common initialization in opposite directions, and the comparison is not between a short baseline and a long PASS output at an arbitrary training snapshot but between two policies that, starting from the same rollout length, are pushed to shrink and to grow respectively by the optimization objective. The direction of drift is not a free parameter: it is determined by the advantage-shaping rule that each run optimizes against, with PASS admitting rollout length whenever the marginal contribution is above the current density of the path and the baseline curtailing it whenever cumulative friction outweighs local gains.

\begin{figure}[htbp]
\centering
\includegraphics[width=0.85\linewidth]{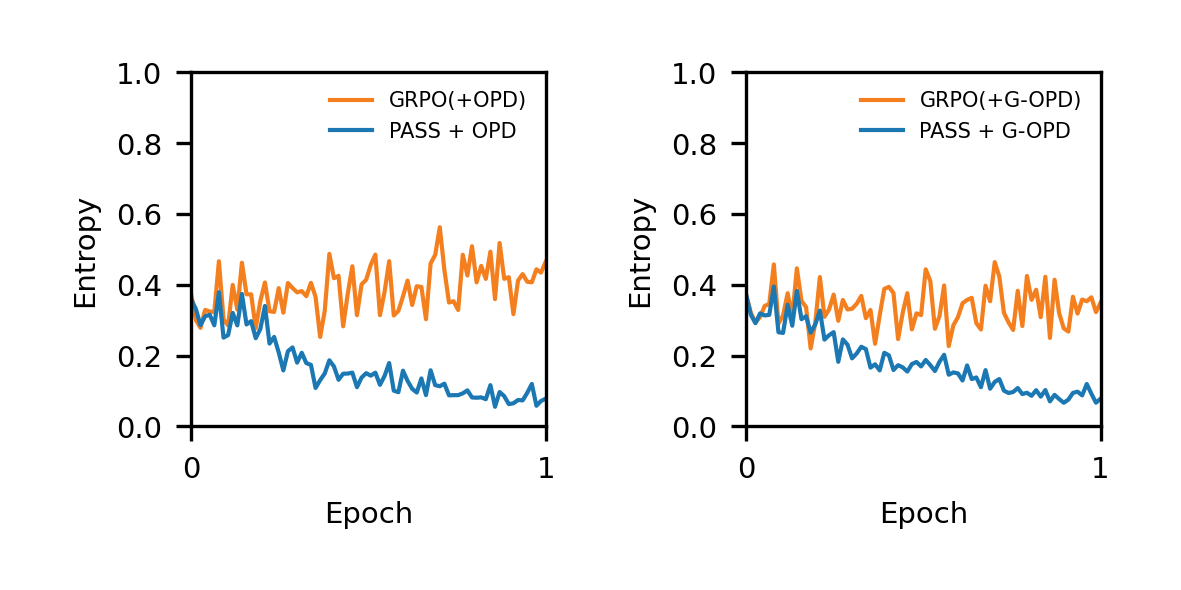}
\caption{Training-time token-level entropy of the actor on HotpotQA across one training epoch, for the same four Masked-Norm configurations. \textbf{Left:} OPD. \textbf{Right:} G-OPD. All four runs start from a common actor entropy of roughly $0.35$. The two baseline runs stay near $0.35$--$0.45$ throughout training, whereas the two PASS runs decrease monotonically to $0.07$--$0.10$.}
\label{fig:opd_entropy_dynamics}
\end{figure}

\begin{figure}[htbp]
\centering
\includegraphics[width=0.85\linewidth]{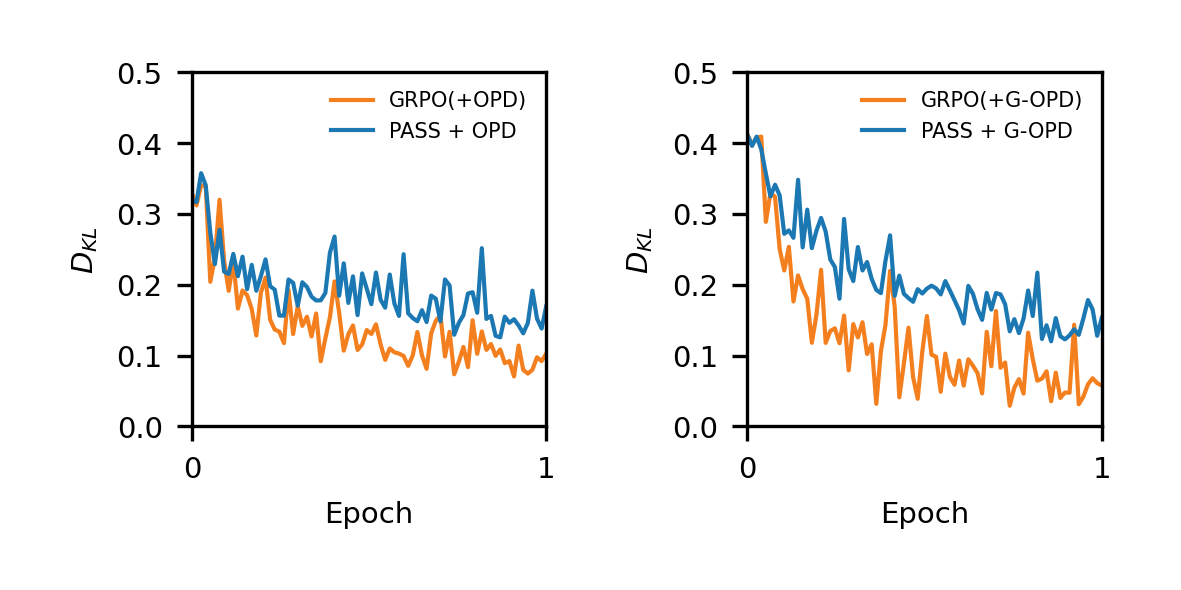}
\caption{Training-time mean token-level KL divergence against the teacher on HotpotQA across one training epoch, for the same four Masked-Norm configurations. \textbf{Left:} OPD. \textbf{Right:} G-OPD. Both the baseline and the PASS runs reduce $D_{\mathrm{KL}}$ relative to the teacher over training, but the PASS runs converge to a higher residual $D_{\mathrm{KL}}$ (around $0.15$) than the corresponding baseline runs ($\approx 0.10$ on OPD and $\approx 0.05$ on G-OPD).}
\label{fig:opd_teacher_kl_dynamics}
\end{figure}

\paragraph{Training-time entropy and teacher-KL dynamics.} Figures~\ref{fig:opd_entropy_dynamics} and~\ref{fig:opd_teacher_kl_dynamics} complement the length profile of Figure~\ref{fig:opd_length_dynamics} with two further training-time observables on the same four Masked-Norm runs: the actor's token-level entropy and its mean token-level KL against the teacher. Two regularities are visible and consistent across the OPD and G-OPD panels. First, the two PASS runs drive the actor's entropy down to $0.07$--$0.10$ over training, while the two baseline runs stay near the initial $0.35$--$0.45$ band; taken together with the rollout-length panel, PASS produces policies that are simultaneously longer and more confident at the token level, which is hard to reconcile with a reading in which the extra length is uncommitted padding. Second, on the teacher-KL panel, the PASS runs reduce $D_{\mathrm{KL}}$ against the teacher over training, but plateau at a noticeably higher residual level than the corresponding baseline runs. The interpretation we adopt is deliberately conservative: read jointly with the downward length drift and the near-flat entropy of the baselines, the very low baseline $D_{\mathrm{KL}}$ is most parsimoniously attributed to baselines that shorten their rollouts and stay close to the easy, high-agreement prefix of the teacher's distribution; the higher plateau of PASS is consistent with a policy that commits to a longer, lower-entropy rollout while maintaining a non-degenerate distance to the teacher, rather than collapsing onto it. We do not claim these three panels isolate a causal mechanism, but the three observables move in a mutually consistent direction under PASS versus the baseline across both process signals.

\begin{table}[htbp]
\centering
\caption{Average response length in tokens (Qwen2.5 tokenizer) on the three multi-hop QA benchmarks for all eight configurations of Tables~\ref{tab:hotpotqa_main} and \ref{tab:hotpotqa_scaled}. Under Masked-Norm, PASS is associated with a sizeable length expansion on top of its base run; under Abs-Max Scaling, the expansion is substantially smaller.}
\label{tab:length_matrix}
\resizebox{\textwidth}{!}{%
\begin{tabular}{@{}llccccc@{}}
\toprule
\textbf{Operator} & \textbf{Method} & \textbf{MuSiQue} & \textbf{2Wiki} & \textbf{HotpotQA} & \textbf{Average} & \textbf{$\Delta$ rel.} \\
\midrule

 & GRPO(+OPD)                & 188.14  & 159.66  & 136.56  & 161.45  & --  \\
\rowcolor{mylightblue}\cellcolor{white}
 & \textbf{PASS + OPD}       & 412.15  & 286.94  & 240.02  & 313.04  & $+94\%$ \\
 & GRPO(+G-OPD)              & 170.68  & 131.74  & 116.60  & 139.67  & --  \\
\rowcolor{mylightblue}\cellcolor{white}\multirow{-4}{*}{Masked-Norm}
 & \textbf{PASS + G-OPD}     & 476.59  & 316.68  & 262.13  & 351.80  & $+152\%$ \\
\midrule

 & GRPO(+OPD, scaled)              & 208.35  & 180.44  & 139.71  & 176.17  & --  \\
\rowcolor{mylightblue}\cellcolor{white}
 & \textbf{PASS + OPD (scaled)}    & 261.46  & 214.47  & 194.64  & 223.52  & $+27\%$ \\
 & GRPO(+G-OPD, scaled)            & 248.32  & 176.21  & 131.08  & 185.20  & --  \\
\rowcolor{mylightblue}\cellcolor{white}\multirow{-4}{*}{Abs-Max Scaling}
 & \textbf{PASS + G-OPD (scaled)}  & 275.75  & 228.03  & 202.09  & 235.29  & $+27\%$ \\
\bottomrule
\end{tabular}
}
\end{table}

\paragraph{Two length regimes.} Table~\ref{tab:length_matrix} exhibits a clear dichotomy. Under Masked-Norm, attaching PASS roughly doubles or more the average response length (OPD: $161 \to 313$ tokens, $+94\%$; G-OPD: $140 \to 352$ tokens, $+152\%$). Under Abs-Max Scaling the corresponding expansion is limited to about one quarter (OPD: $176 \to 224$ tokens, $+27\%$; G-OPD: $185 \to 235$ tokens, $+27\%$). A natural question is whether the Masked-Norm expansion is consistent with what PASS actually promises. As discussed in \S\ref{sec:method}, DL is a two-sided marginal-value filter rather than a length penalty: it admits additional steps only when they raise the path's average value density, but does not force the final trajectory to be shorter than a baseline trajectory. The empirical question is therefore whether the observed length growth is accompanied by a commensurate accuracy gain on the same benchmarks, or whether it is closer to free length padding.

\paragraph{Accuracy-coupled length growth.} On the hardest benchmark in our multi-hop panel, MuSiQue (four hops with adversarial distractors), the Masked-Norm rows show that the length growth of PASS is tied to a substantial pass@1 improvement rather than decoupled from accuracy. For OPD the average length grows from $188$ to $412$ tokens, about a factor of two, and pass@1 rises from $37.68\%$ to $48.50\%$, a relative improvement of roughly $29\%$; for G-OPD the length grows from $171$ to $477$ tokens while pass@1 rises from $39.01\%$ to $47.29\%$. On 2Wiki and HotpotQA the growth in length is more moderate and pass@1 improves by a smaller but still positive margin. This pattern is inconsistent with a scenario in which PASS trades length for no additional correctness (as would be expected from pure step padding) and is consistent with DL's two-sided marginal-value filter admitting the extra steps precisely because they raise the average value density of the trajectory.

\paragraph{A mechanism-level reading.} We describe, without asserting strict causality, how the observed length regimes relate to the interaction between the standardization operator and the downstream DL rule. Because Masked-Norm subtracts the group mean, for KL-style signals whose raw origin does not coincide with the within-group mean, the operation displaces the zero of $\tilde{s}_{\text{prc}}$ relative to the zero of $s_{\text{prc}}$, which in turn reshapes the group distribution of marginal values that the DL rule of Equation~\ref{eq:prcvdl_dl_marginal} sees. Under this displacement, some tokens whose raw signal was above the origin can still fall on the negative half of $\tilde{s}_{\text{prc}}$, while others whose raw signal was below the origin can be pushed above it, expanding the dynamic range of the standardized signal that DL integrates. Under Abs-Max Scaling the zero of $\tilde{s}_{\text{prc}}$ coincides with the zero of $s_{\text{prc}}$, so the marginal distribution entering DL is closer to the raw signal's own profile. The two operators therefore place PASS at two different operating points of the same two-sided marginal-value filter, and it is plausible that the wider dynamic range offered by Masked-Norm gives the policy more room to lengthen its reasoning whenever those longer trajectories still clear DL's density threshold. We stress that this reading is descriptive: it accounts for the observed regimes without an explicit causal intervention on the standardization operator, and isolating the contribution of each mechanism is left to future work.

\paragraph{A caveat on ``value density.''} Dividing pass@1 by average response length gives a very coarse proxy for value density, and on this proxy every PASS row sits below its baseline row in our multi-hop panel. We do \emph{not} claim PASS raises this ratio. DL does not promise that pass@1-per-token goes up; it promises that a \emph{single additional step} is rewarded only when its contribution exceeds the current density of the path, which is a local property of the gradient rather than a global property of the deployed trajectory. The coupling we report above, between large relative pass@1 gains and large relative length growth on the hardest benchmark, is the empirical signal we rely on; the pass@1-per-token ratio is not.

\paragraph{Takeaway.} The main experimental message of Tables~\ref{tab:hotpotqa_main}, \ref{tab:hotpotqa_scaled}, and \ref{tab:length_matrix} is consistent across the matrix of experiments: PASS delivers a positive pass@1 gain on both KL-style process signals and under both standardization operators. The choice of standardization operator shifts the length and accuracy profile along which PASS operates: Masked-Norm aligns the setup with the mathematical-reasoning experiments of \S\ref{sec:exp_math} and admits longer trajectories that are coupled to larger pass@1 gains on difficult multi-hop prompts, while Abs-Max Scaling keeps each entry of $\tilde{s}_{\text{prc}}$ on the same side of zero as the corresponding raw signal and yields smaller but length-controlled gains. The two operators are better read as two operating points of the same underlying middleware than as a quality ordering.

\section{Multi-hop QA Training Configuration and Additional Metrics}
\label{app:hparams}

All multi-hop QA runs (four reported in \S\ref{sec:exp_hotpotqa} and four reported in Appendix~\ref{app:kl_operators}) share the following optimizer and sampling configuration: rollout and training global batch sizes are both $128$; the maximum sequence length is $6{,}144$; the sampler produces $16$ rollouts per prompt; we use Adam with learning rate $1.0 \times 10^{-6}$, a constant decay schedule with $20$ warmup steps, $\beta_1 = 0.9$, $\beta_2 = 0.999$, and weight decay $0$; the outer loop runs for $2$ training epochs; the GRPO clip ratio is $0.2$; and the reference-KL coefficient $\beta_{\text{KL}}$ is $10^{-2}$. Evaluation numbers are reported at the end of training (roughly $78$ optimizer updates on the 5k-prompt subset at the above batch size). The actor is Qwen2.5-7B-Instruct and the teacher is Qwen2.5-32B-Instruct in every run. The remaining run-specific fields are summarized in Table~\ref{tab:hparams_diff}.

\begin{table}[htbp]
\centering
\caption{Per-run hyperparameter differences for the eight multi-hop QA configurations reported in Tables~\ref{tab:hotpotqa_main}, \ref{tab:hotpotqa_scaled}, and \ref{tab:length_matrix}. Entries marked ``--'' mean the field is not applicable for that configuration.}
\label{tab:hparams_diff}
\resizebox{\textwidth}{!}{%
\begin{tabular}{@{}lccccc@{}}
\toprule
\textbf{Configuration} & \textbf{PASS} & \textbf{Process signal} & \textbf{$\mathcal{N}(\cdot;\mathbf{S})$} & \textbf{$\lambda_{\text{G-OPD}}$} & \textbf{$k_{\text{DL}}$} \\
\midrule
GRPO(+OPD)              & off & OPD   & Masked-Norm    & --    & --    \\
\rowcolor{mylightblue}
\textbf{PASS\,+\,OPD}    & on  & OPD   & Masked-Norm    & --    & $0.7$ \\
GRPO(+G-OPD)            & off & G-OPD & Masked-Norm    & $1.25$ & --   \\
\rowcolor{mylightblue}
\textbf{PASS\,+\,G-OPD}  & on  & G-OPD & Masked-Norm    & $1.25$ & $0.7$ \\
\midrule
GRPO(+OPD, scaled)               & off & OPD   & Abs-Max Scaling & --    & --    \\
\rowcolor{mylightblue}
\textbf{PASS\,+\,OPD (scaled)}   & on  & OPD   & Abs-Max Scaling & --    & $0.7$ \\
GRPO(+G-OPD, scaled)             & off & G-OPD & Abs-Max Scaling & $1.25$ & --   \\
\rowcolor{mylightblue}
\textbf{PASS\,+\,G-OPD (scaled)} & on  & G-OPD & Abs-Max Scaling & $1.25$ & $0.7$ \\
\bottomrule
\end{tabular}%
}
\end{table}

\paragraph{Signal-aggregation weights.} All eight configurations set the mixing weights $w_{\text{out}} = w_{\text{fmt}} = w_{\text{prc}} = 1.0$ for the outcome, format, and process-signal streams, so that the total magnitude $W = 3.0$ in Equation~\ref{eq:prcvdl_advantage_shaping} is shared across runs.

\paragraph{Chunk-by-Value and Divide-Length.} For the four PASS configurations we enable both the CV and DL sub-modules and fix the length exponent to $k = 0.7$. The non-PASS configurations do not invoke these sub-modules.

\paragraph{G-OPD anchoring.} The four G-OPD configurations use the student's initial weights as $\pi_{\text{base}}$ and the shift coefficient $\lambda = 1.25$; the four OPD configurations reduce to the $(\pi_\theta, \pi_{\text{teach}})$ formulation and do not consume $\lambda$.

\paragraph{Additional metrics: pass@8 and pass@16.} Table~\ref{tab:pass_at_k_extra} reports pass@8 and pass@16 for the same eight configurations on the three multi-hop QA benchmarks, complementing the pass@1 numbers in the main body. The ordering of the methods at $k \in \{8, 16\}$ is broadly consistent with the pass@1 ordering; in particular, the PASS variants retain their non-negative margin on the unweighted average under both Masked-Norm and Abs-Max Scaling.

\begin{table}[htbp]
\centering
\caption{pass@8 and pass@16 (\%) on the three multi-hop QA benchmarks for the eight configurations of Table~\ref{tab:hparams_diff}.}
\label{tab:pass_at_k_extra}
\resizebox{\textwidth}{!}{%
\begin{tabular}{@{}llccccccc@{}}
\toprule
 & & \multicolumn{3}{c}{\textbf{pass@8}} & \multicolumn{3}{c}{\textbf{pass@16}} & \\
\cmidrule(lr){3-5}\cmidrule(lr){6-8}
\textbf{Operator} & \textbf{Method} & \textbf{MuSiQue} & \textbf{2Wiki} & \textbf{HotpotQA} & \textbf{MuSiQue} & \textbf{2Wiki} & \textbf{HotpotQA} & \textbf{Avg pass@16} \\
\midrule

 & GRPO(+OPD)                & 60.04 & 73.69 & 72.72 & 65.00 & 75.94 & 75.02 & 71.99 \\
\rowcolor{mylightblue}\cellcolor{white}
 & \textbf{PASS + OPD}     & 65.85 & 75.30 & 72.00 & 69.51 & 76.97 & 73.63 & 73.37 \\
 & GRPO(+G-OPD)              & 59.93 & 73.21 & 72.12 & 64.92 & 75.31 & 74.26 & 71.50 \\
\rowcolor{mylightblue}\cellcolor{white}\multirow{-4}{*}{Masked-Norm}
 & \textbf{PASS + G-OPD}   & 65.46 & 75.30 & 72.21 & 69.55 & 76.88 & 73.80 & 73.41 \\
\midrule

 & GRPO(+OPD, scaled)              & 64.15 & 75.53 & 72.87 & 68.35 & 77.46 & 74.94 & 73.58 \\
\rowcolor{mylightblue}\cellcolor{white}
 & \textbf{PASS + OPD (scaled)}  & 66.25 & 77.20 & 73.13 & 70.91 & 79.38 & 75.23 & 75.17 \\
 & GRPO(+G-OPD, scaled)            & 61.99 & 74.06 & 71.03 & 66.45 & 75.87 & 72.76 & 71.69 \\
\rowcolor{mylightblue}\cellcolor{white}\multirow{-4}{*}{Abs-Max Scaling}
 & \textbf{PASS + G-OPD (scaled)}& 65.36 & 76.46 & 72.68 & 69.92 & 78.57 & 74.69 & 74.39 \\
\bottomrule
\end{tabular}%
}
\end{table}

\end{document}